\documentclass[twoside]{article}

\usepackage[preprint]{aistats2027}
\usepackage[authoryear,round]{natbib}

\usepackage[utf8]{inputenc} 
\usepackage[T1]{fontenc}    
\usepackage{url}            
\usepackage{booktabs}       
\usepackage{amsfonts}       
\usepackage{nicefrac}       
\usepackage{microtype}      
\usepackage{amsmath}
\usepackage{amssymb}
\usepackage{amsthm}
\usepackage{graphicx}
\usepackage{mathtools}
\usepackage{algorithm}
\usepackage{algpseudocode}
\usepackage{wrapfig}
\usepackage{changepage}

\usepackage[colorlinks=true]{hyperref}     
\usepackage{xcolor}
\definecolor{linkblue}{RGB}{0,0,128}
\hypersetup{linkcolor=linkblue,citecolor=linkblue,urlcolor=linkblue}
\definecolor{terracotta}{HTML}{9D5242}

\newcommand{\tc}[1]{\textcolor{blue}{[TC: #1]}}

\usepackage{mathtools}
\mathtoolsset{showonlyrefs=true}

\theoremstyle{plain}
\newtheorem{theorem}{Theorem}[section]
\newtheorem{proposition}[theorem]{Proposition}

\theoremstyle{definition}

\theoremstyle{remark}

\begin{document}

%

%

\twocolumn[

\aistatstitle{Feature Space Adaptation for Effortless Gaussian Process Flows}


\begin{adjustwidth}{-0.2in}{-0.2in}
\aistatsauthor{
  Thomas Cowperthwaite
  \And Louis Sharrock
  \And Lachlan Astfalck
  \And Henry Moss
}
\aistatsaddress{
  University of Cambridge\\tc656@cam.ac.uk
  \And University College London
  \And University of New South Wales
  \And Lancaster University
}
\end{adjustwidth}


]

\begin{abstract}
    Outside the linear-Gaussian regime, conditional sampling from Gaussian processes (GPs) is challenging. Recent methods such as FlowGP \citep{moss_conditioning_2026} can condition on arbitrary non-linear and non-Gaussian statements, but at considerable cost: an expensive iterative and high-dimensional diffusion that requires hand-specified kernel hyperparameters. In this paper, we alleviate two significant drawbacks of FlowGP by (1) introducing kernel approximations that enable scaling to high-resolution domains and (2) proposing a way to obtain the marginal likelihood by measuring the work needed to steer the diffusion towards conditioning statements. We enable, for the first time, hyperparameter optimisation within FlowGP and demonstrate our approach on probabilistic downscaling from areal summary statistics, PDE solution inference on irregular domains, and recovery of sea level anomaly fields from non-Gaussian satellite observations.


\end{abstract}

\section{INTRODUCTION}
Gaussian processes (GPs) provide flexible probabilistic models over unknown functions, with calibrated uncertainty and a natural means of incorporating prior knowledge \citep{rasmussen_gaussian_2006}.
Conditioning on data is analytically tractable for linear observations with Gaussian noise.
However, scientific information often takes more complex forms, including non-linear differential equations, integral measurements, inequalities, and boundary constraints. 
Existing solutions typically rely on posterior approximations \citep{rue2007approximate, cseke2011approximate, hensman2013gaussian}, or specialised kernels and inference schemes derived for particular classes of constraints \citep{hamelijnck2024physics, long_autoip_2022, steinert2025integration}.
Recently, \cite{moss_conditioning_2026} introduced \textsc{FlowGP}, a general framework for conditioning GPs on \textit{non-linear} and \textit{non-Gaussian} statements, without the need for bespoke inference schemes.
%

Despite their promise, however, GP flows have several important limitations.
First, the flow operates over a fixed discretisation of the function's domain, and thus the computational requirements grow with the number of evaluation points $N$, including an initial $\mathcal{O}(N^3)$ matrix factorisation and an $N$-dimensional ODE solve.
This becomes restrictive when the number of evaluation points is large, such as for fine spatial or spatio-temporal grids.
%
Large computational cost further makes repeated computation across, and thus optimisation of, kernel hyperparameters prohibitively expensive.
%
Finally, known structure such as boundary conditions must be represented via conditioning statements, which makes the sampling diffusion challenging to solve \citep{moss_conditioning_2026}.

\begin{figure*}[t]
    \centering
    \includegraphics[width=\linewidth]{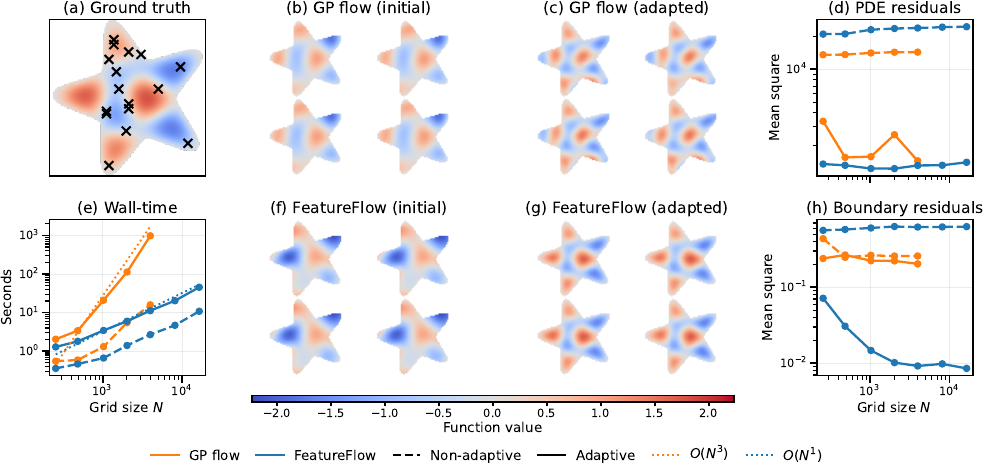}
    \caption{Gaussian process samples, conditioned on non-conjugate information, i.e., satisfying a non-linear PDE. \textbf{(a)} The PDE domain, equipped with a homogeneous Dirichlet boundary condition, and noisy Gaussian observations. Samples drawn from a GP flow \textbf{(b, c)} and \textsc{FeatureFlow}, first using deliberately misspecified kernel hyperparameters \textbf{(b, f)}, and secondly using the adaptive scheme \textbf{(c, g)}. \textsc{FeatureFlow} obtains higher-quality samples, as measured by condition residual \textbf{(d, h)}, and with more than an order of magnitude speed-up \textbf{(e)}.}
    \label{fig: pde-results}
\end{figure*}

In this work, we introduce \textsc{FeatureFlow}, which extends GP flows to operate on the weights of a finite-feature GP approximation instead of function values.
The resulting flow has dimension $R \ll N$, where $R$ denotes the (tunable) number of features, and decouples the cost of the flow from the resolution at which its function samples are evaluated.
This makes kernel hyperparameter optimisation practical and the feature representation also provides access to analytic derivatives which may be further exploited.
We evaluate the resulting method through wall-time scaling and numerical accuracy, across two important problem classes: probabilistic downscaling \citep{perezhogin_generative_2023, schmidt_generative_2025} and inference of PDE solutions on irregular domains, as shown in Fig. \ref{fig: pde-results} \citep{khara_neural_2022, zhao_diffeomorphism_2025}.
We also demonstrate the real-world utility of scalable flow-based inference by applying \textsc{FeatureFlow} to a well-studied, challenging non-linear inverse problem in oceanography ~\citep{fu_satellite_2001, passaro_potential_2023}.

\subsection{Related Work}
\label{sec: rel-work}

\textbf{Scalable GP representations. }The computational cost of exact Gaussian process inference has motivated a large literature on scalable approximations, including inducing-point methods \citep{titsias_variational_2009}, structured kernel approximations \citep{saatci_scalable_2011}, and finite-dimensional feature representations.
Examples of finite-feature respresentations include Fourier features adapted to bounded domains \citep{hensman_variational_2018}, Laplacian eigenfunctions that incorporate domain geometry and boundary conditions \citep{solin_know_2019}, and spherical harmonics for functions defined on spherical domains \citep{dutordoir_sparse_2020}.
Beyond scalability, each basis imposes constraints on encoded functions---many of which are common in scientific problems.

\textbf{Non-conjugate GP inference. } Outside of linear-Gaussian conditioning, exact GP posterior inference is generally intractable and is commonly addressed using general approximations such as the Laplace approximation \citep{rue2007approximate}, expectation propagation \citep{cseke2011approximate}, and variational inference \citep{hensman_variational_2018}.
A complementary line of work develops specialised inference procedures or kernel constructions for particular forms of information, including inequality and shape constraints \citep{riihimaki_gaussian_2010, lopez-lopera_finite-dimensional_2018}, differential equations and physical constraints \citep{hamelijnck2024physics,long_autoip_2022}, and non-Gaussian likelihoods \citep{nickisch_state_2018}.
More recently, flow-based approaches have provided a general mechanism for conditioning GPs on arbitrary differentiable, non-linear and non-Gaussian information \citep{moss_conditioning_2026, sharrock_latentflow_2026}.

\textbf{Hyperparameter optimisation. }Kernel hyperparameters in standard GP regression are commonly learned by maximising the marginal likelihood \citep{rasmussen_gaussian_2006}, and analogous approximate objectives exist for non-conjugate likelihoods \citep{li_improving_2023}.
When conditioning information is specified through general constraints rather than a conventional pointwise likelihood, standard marginal-likelihood hyperparameter learning is no longer directly applicable, motivating constraint-specific likelihoods or inference procedures ~\citep{lopez-lopera_finite-dimensional_2018}.
While versatile, the GP flows introduced by \cite{moss_conditioning_2026} are computationally expensive and afford no practical way to optimise kernel hyperparameters.

\section{GAUSSIAN PROCESSES AND SAMPLING}
\label{sec: gps-sampling}

Let $f\colon\mathcal{X}\to\mathbb{R}$ be a random function over $\mathcal{X}\subseteq\mathbb{R}^{D}$. 
Assume $f$ has a GP prior, written $f \sim \mathcal{GP}(\mu,k_{\boldsymbol\theta})$, specified by a mean function $\mu\colon\mathcal{X}\to\mathbb{R}$ and a positive semidefinite kernel $k_{\boldsymbol\theta}\colon\mathcal{X}\times\mathcal{X}\to\mathbb{R}$ with hyperparameters $\boldsymbol\theta$. 
By definition, for every finite collection of inputs $X=(x_1,\ldots,x_N)$, the vector of GP function evaluations $\mathbf f \coloneqq [f(x_1),\ \ldots,\ f(x_N)]^{\top}$
is multivariate Gaussian distributed $\mathbf f \sim \mathcal{N}(\mathbf m,\mathbf K(\boldsymbol\theta)),$ with elements $\mathbf m_i=\mu(x_i)$ and $\mathbf K_{ij}(\boldsymbol\theta)=k_{\boldsymbol\theta}(x_i,x_{j})$ for $i, j \in \{1, \dots, N\}$.
Commonly, we sample from $\mathbf f \equiv f(X)$ by the mean-scale transformation $\mathbf f = \mathbf m+\mathbf L\mathbf z$, where $\mathbf L\mathbf L^\top=\mathbf K$ and white noise $\mathbf z\sim\mathcal{N}(\mathbf 0,\mathbf I_N)$. 
In general, constructing $\mathbf L$ costs $\mathcal{O}(N^3)$ time and $\mathcal{O}(N^2)$ memory, and the subsequent sample costs $\mathcal{O}(N^2)$. 
%

When the covariance function is stationary, there are well-studied solutions to scalable sampling of GPs.
Through application of Bochner's theorem, a number of finite-feature kernel approximations have been proposed, including random Fourier features \citep{rahimi_random_2007} and Laplacian eigenfunctions \citep{solin_know_2019,solin_hilbert_2020}. 
In each case, the full kernel matrix is replaced with a low-rank approximation, defined by the number of features used $R$, and samples are drawn by projecting standard normal samples in the coefficient space back into the function space.
These methods result in a
sampling time complexity of $\mathcal{O}(NR)$, with $R$ serving as a tunable parameter that determines approximation quality.

When $k_{\boldsymbol\theta}$ is stationary, we approximate the kernel using $R$ basis functions $\{\psi_{\boldsymbol\theta}^{r}\}_{r=1}^{R}$ with feature vector $\boldsymbol\psi_{\boldsymbol\theta}(x) = [\psi_{\boldsymbol\theta}^{1}(x),\ \ldots,\ \psi_{\boldsymbol\theta}^{R}(x)]^\top$ and matrix $\boldsymbol\Psi_{\boldsymbol\theta} = \boldsymbol\psi_{\boldsymbol\theta}(X)^\top \in\mathbb{R}^{N\times R}$.
For a suitable choice of basis representation (cf. App. \ref{app: bases} for examples), the kernel approximation takes the form $
    k_{\boldsymbol\theta}(x,x')
    \approx
    \boldsymbol\psi_{\boldsymbol\theta}(x)^\top\boldsymbol\psi_{\boldsymbol\theta}(x'),
    $
    so that
    $
    \mathbf K(\boldsymbol\theta)
    \approx
    \widetilde{\mathbf K}(\boldsymbol\theta)
    =
    \boldsymbol\Psi_{\boldsymbol\theta}\boldsymbol{\Psi_{\boldsymbol\theta}}^\top.
    \label{eq: feature-kernel}
$
Equivalently, the approximate GP may be represented directly in weight space using the decoder function $g_{\boldsymbol\theta}$, namely, 
\begin{equation}
    g_{\boldsymbol\theta} (\boldsymbol\xi) \coloneqq 
    \widetilde f_{\boldsymbol\theta}
    =
    \mu+\boldsymbol\psi_{\boldsymbol\theta}^\top \boldsymbol\xi,
    \qquad
    \boldsymbol\xi\sim\mathcal{N}(\mathbf 0,\mathbf I_R).
    \label{eq: weight-space-gp}
\end{equation}
A functional sample can be evaluated at arbitrary locations, incurring only an $\mathcal{O}(NR)$ cost \citep{rahimi_random_2007, solin_know_2019, solin_hilbert_2020}. 
Here, $\boldsymbol\xi \in \mathbb{R}^R$ with $R \ll N$, and \eqref{eq: weight-space-gp} provides the primary mechanism by which we reduce computation in the developments to come.
Using a finite-feature representation is useful beyond scalability; by intelligently selecting a basis in which to approximate the kernel, problem-specific structure may be enforced exactly.

When $k_{\boldsymbol\theta}$ is non-stationary, scalable sampling is more difficult as the basis decomposition in \eqref{eq: weight-space-gp} does not hold. This can arise, for instance, when $k_{\boldsymbol\theta}$ corresponds to the kernel of a conditioned GP.
\cite{wilson_efficiently_2020} describe a decoupled basis that provides exact pathwise corrections to approximate prior samples, 
 thus approximating the GP posterior.
%
In Section \ref{sec:update-not-diffuse}, we will adapt this framework to pre-condition linear-Gaussian information into the GP flows.

\subsection{Non-linear and Non-Gaussian Conditioning}

Finite-feature representations do not resolve the difficulties of non-conjugacy. 
Recent flow-based methods, introduced for GPs by \citet{moss_conditioning_2026} and since extended to arbitrary stochastic processes by \citet{sharrock_latentflow_2026}, provide a general framework for this task, by reframing conditional sampling as a transport problem.
Consider the interpolation
\begin{equation} \label{eqn:gauss_interp}
    \mathbf f_t
    =
    \alpha(t)\mathbf f_0
    +
    \sqrt{1-\alpha(t)^2}\,\mathbf z,
    \qquad
    \mathbf z\sim\mathcal{N}(\mathbf 0,\mathbf I_N),
\end{equation}
where $\alpha(t) = \exp (-\frac{1}{2} \int_0^t \beta(s) \mathrm{d}s)$ for a typical noise schedule $\beta(t)$.
As $t$ increases, this progressively corrupts the sample $\mathbf f_0$ towards Gaussian noise.
%
Since $\mathbf f_0\sim\mathcal N(\mathbf m,\mathbf K(\boldsymbol\theta))$, the marginals of this interpolation are given in closed form,
\begin{equation} \label{eqn:marginals}
    p_t(\mathbf f_t)
    =
    \mathcal{N}\!\left(
        \alpha(t)\mathbf m,\,
        \alpha(t)^2\mathbf K(\boldsymbol\theta)+
        \bigl(1-\alpha(t)^2\bigr)\mathbf I_N
    \right),
\end{equation}
defining a path, continuous in $t$, from the GP distribution to a standard Gaussian reference.
Equivalently, by transporting samples along the path given by the reverse-time stochastic differential equation (SDE) 
\begin{equation} 
    \label{eq: reverse}
    \mathrm{d}\mathbf{f}_t = -\beta(t)\big[
    \tfrac{1}{2} \mathbf{f}_t
    + \nabla_{\mathbf{f}_t}
    \log p_t(\mathbf{f}_t)
    \big]\mathrm{d}t
    +\sqrt{\beta(t)}\, \mathrm{d}\overline{\mathbf{W}}_t,
\end{equation}
where $t: 1 \rightarrow 0$ and $\overline{\mathbf{W}}_t$ is a reverse-time Wiener process \citep{anderson1982reverse}, we preserve the time-$t$ marginals in \eqref{eqn:marginals}.
Because $p_t$ is Gaussian, the unconditional score $\nabla_{\mathbf f_t}\log p_t(\mathbf f_t)$ is available analytically.
Conditioning can now be incorporated by augmenting Eq. \eqref{eq: reverse} with a likelihood-dependent guidance term; for information $\mathcal C$ and by Bayes' rule
\begin{equation}
    \nabla_{\mathbf f_t}\log p_t(\mathbf f_t\mid\mathcal C)
    =
    \underbrace{
        \nabla_{\mathbf f_t}\log p_t(\mathbf f_t)
    }_{\text{closed-form}}
    +
    \underbrace{
        \nabla_{\mathbf f_t}\log p_t(\mathcal C\mid\mathbf f_t).
    }_{\text{conditional guidance}}
    \label{eq: score-decomp}
\end{equation}
The second term is calculated using samples from $p(\mathbf f_0\mid\mathbf f_t)$, available in closed-form, and computing  pointwise evaluation of the likelihood $p(\mathcal C\mid\mathbf f_0)$. 
This allows the sampler to accommodate non-linear and non-Gaussian observations and conditioning statements.

The cubic scaling of GP flows restricts utility, making repeated sampling infeasible for many important large-scale problems, limiting the ability to tune kernel hyperparameters, which, we demonstrate, is crucial for generating high-quality samples.
Some problem classes, including sampling PDE solutions, are currently more challenging than necessary, due to the error resulting from finite-difference approximations of derivatives.
Introducing a scalable kernel approximation addresses both of these problems directly.

\section{FEATUREFLOW}
\label{sec: methods}

We now re-derive the generating SDE in \eqref{eq: reverse} in the coefficient space of a finite GP expansion described in \eqref{eq: weight-space-gp}.
This separates the dimension of the sampling dynamics from the resolution of the evaluation grid, making repeated computation possible. 
In turn, this enables scalable estimation of kernel hyperparameters. 
Gaussian observations are incorporated exactly through a Matheron-corrected decoder \citep{wilson_efficiently_2020}, simplifying the path-dynamics of the generative flow.
%
Finally, we introduce methodology to adapt the kernel hyperparameters during integration of \eqref{eq: reverse} using the same conditional information that guides the flow, resulting in a parsimonious framework for joint estimation of the process $f$ and its hyperparameters. 

\begin{figure}
    \centering
    \includegraphics[width=0.8\linewidth]{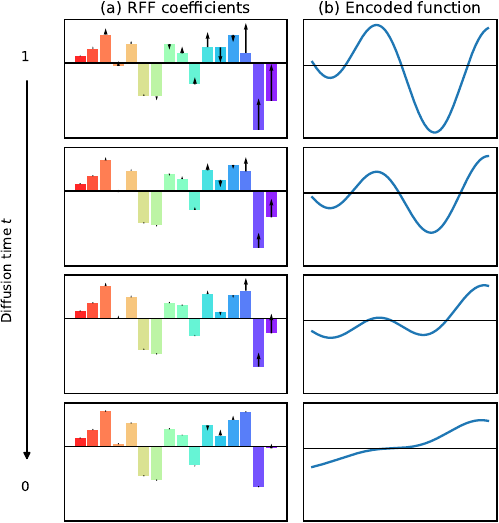}
    \caption{Schematic of \textsc{FeatureFlow} with fixed hyperparameters. RFF coefficients \textbf{(a)} and the functions they encode \textbf{(b)} for a single sampling trajectory backwards in diffusion time, for a monotone increasing condition.}
    \label{fig: schematic}
\end{figure}

\subsection{Conditional Flow in Weight Space}
\label{subsec:weight_flow}

Assume, here, the kernel hyperparameters $\boldsymbol\theta$ are fixed; their adaptation is introduced in Sec.~\ref{sec: methods-hyperparameters}.
Assuming an appropriate kernel decomposition as in \eqref{eq: weight-space-gp}, such that $f \approx g_{\boldsymbol\theta}(\boldsymbol \xi)$, we apply the same distributional interpolation as in \eqref{eqn:gauss_interp} directly to the weight vector
\begin{equation}
    \boldsymbol \xi_t
    =\alpha(t)\boldsymbol \xi_0
    +\sqrt{1-\alpha(t)^2}\,\mathbf z,
    \quad
    \boldsymbol\xi_0,\mathbf z\sim\mathcal N(\mathbf0,\mathbf I_R).
    \label{eq:methods-weight-interpolation}
\end{equation}
See Fig.~\ref{fig: schematic}(a) for a schematic. For fixed $\boldsymbol\theta$, define the decoded likelihood by $L_{\boldsymbol\theta}(\boldsymbol\xi)
    \coloneqq
    p\left(\mathcal C\mid g_{\boldsymbol\theta}(\boldsymbol\xi)\right)$.
%
In the weight-space, the target posterior is given by
\begin{equation}
    \pi_{\boldsymbol\theta}(\boldsymbol\xi_0\mid\mathcal C)
    =
    \frac{
        \mathcal N(\boldsymbol\xi_0;\mathbf0,\mathbf I_R)
        L_{\boldsymbol\theta}(\boldsymbol\xi_0)
    }{Z_{\boldsymbol\theta}},
    \label{eq:methods-weight-target}
\end{equation}
with $Z_{\boldsymbol\theta}=\mathbb E_{\boldsymbol\xi\sim\mathcal N(\mathbf0,\mathbf I_R)}\left[L_{\boldsymbol\theta}(\boldsymbol\xi)\right]$.
A posterior draw from $\widetilde f_{\boldsymbol\theta}$ is obtained by propagating a draw of $\boldsymbol\xi_0$ from \eqref{eq:methods-weight-target} through \eqref{eq: weight-space-gp}.
%
Thus an exact conditional score and exact integration recover the posterior induced by the chosen finite-feature GP.
As both $\boldsymbol\xi_0$ and the noising variable are standard normal, the distribution of $\boldsymbol\xi_t$ in \eqref{eq:methods-weight-interpolation} is $\mathcal{N}(\mathbf0,\mathbf I_R)$ for all $t$.


Following \eqref{eq: reverse}, 
the reverse-time SDE that corresponds to \eqref{eq:methods-weight-interpolation} is
\begin{equation}
\begin{aligned}
    \mathrm d\boldsymbol\xi_t
    =-\beta(t)\left[
        \tfrac12\boldsymbol\xi_t
        +\nabla_{\boldsymbol\xi_t}\log p_t(\boldsymbol\xi_t)
    \right]\mathrm dt
    +\sqrt{\beta(t)}\,\mathrm d\overline{\mathbf W}_t
    \label{eq:methods-weight-reverse-sde}
\end{aligned}
\end{equation}
with dimension $R\ll N$.
Define the smoothed likelihood
\begin{equation}
    \label{eq: smooth-like}
    \overline L_{t,\boldsymbol\theta}(\boldsymbol\xi_t)
    \coloneqq
    \mathbb E_{\boldsymbol\xi_0\sim p(\boldsymbol\xi_0\mid\boldsymbol\xi_t)}
    \!\left[L_{\boldsymbol\theta}(\boldsymbol\xi_0)\right].
\end{equation}
The conditional score then decomposes as
\begin{equation}
    \nabla_{\boldsymbol\xi_t}\log p_t(\boldsymbol\xi_t\mid\mathcal C)
    =-\boldsymbol\xi_t
    +\underbrace{
        \nabla_{\boldsymbol\xi_t}
        \log\overline L_{t,\boldsymbol\theta}(\boldsymbol\xi_t)
    }_{\mathbf h_{t,\boldsymbol\theta}(\boldsymbol\xi_t)}.
    \label{eq:methods-weight-score}
\end{equation}
The unconditional score is therefore trivial in weight space, and all problem-specific computation is confined to the guidance term $\mathbf h_{t,\boldsymbol\theta}$ such that we need only solve the reverse SDE
\begin{equation}
\begin{aligned}
    \mathrm d\boldsymbol\xi_t
    =\beta(t)\left[
        \tfrac12\boldsymbol\xi_t
        -h_{t,\boldsymbol\theta}(\boldsymbol\xi_t)
    \right]\mathrm dt
    +\sqrt{\beta(t)}\,\mathrm d\overline{\mathbf W}_t.
    \label{eq:methods-weight-reverse-sde2}
\end{aligned}
\end{equation}

\subsection{Computing Weight-space Guidance}

With the score decomposed as in Eq.~\eqref{eq:methods-weight-score}, all that remains is the guidance \(\mathbf h_{t,\boldsymbol\theta}(\boldsymbol\xi_t) = \nabla_{\boldsymbol\xi_t}\log \overline L_{t,\boldsymbol\theta}(\boldsymbol\xi_t)\).
This is the gradient of an \(R\)-dimensional integral of the decoded likelihood, which is intractable for non-linear conditions and, despite the reduced dimension, far too large for quadrature.
We therefore estimate it by Monte Carlo, using $S$ bridge samples.
Let
\begin{equation}
\label{eq:snis-quantities}
    \ell_s=\log L_{\boldsymbol\theta}(\boldsymbol\xi_0^{(s)}),
    \qquad
    \overline w_s
    =\frac{\exp(\ell_s)}{\sum_{r=1}^{S}\exp(\ell_r)}.
\end{equation}
Differentiating through Eq.~\eqref{eq: smooth-like} gives
\begin{equation}
    \widehat{\mathbf h}_{t,\boldsymbol\theta}(\boldsymbol\xi_t)
    =
    \alpha(t)
    \sum_{s=1}^{S}\overline w_s
    \nabla_{\boldsymbol\xi_0^{(s)}}
    \log L_{\boldsymbol\theta}(\boldsymbol\xi_0^{(s)}).
    \label{eq:methods-mc-guidance}
\end{equation}
Automatic differentiation provides the likelihood gradient through both the condition and, where needed, the decoder.
Equation~\eqref{eq:methods-mc-guidance} is the Monte Carlo guidance estimator used in the experiments that follow.

Guidance is evaluated in weight space: for a condition $\mathcal C$, its likelihood is computed after applying the differentiable decoder in Eq.~\eqref{eq: weight-space-gp}, as shown in Fig. \ref{fig: schematic}.
For some problems, conditions may be naturally expressed in weight-space (e.g. band-limited signals), rendering the decoder step unnecessary and yielding even greater cost reductions.
This highly scalable flow over approximate GPs via their weight-space representation is a special case of a tilt given in \citet{sharrock_latentflow_2026}.
Function values are formed on the full output grid only when required by a condition or when returning the final sample.
A condition may instead be evaluated at a smaller set of collocation points, making its cost independent of the resolution used for downstream tasks.

As the flow is performed in weight space, where the unconditional covariance structure is the identity, the major cost of sampling, $\mathcal{O}(N^3)$ from the kernel factorisation, is eliminated entirely.
Evaluating the decoder on $N_c$ locations therefore costs $\mathcal O(RN_c)$. With linear-Gaussian data, the remaining dense factorisation is only dependent on the number of observed data (cf. App. \ref{app:update}). 
The cost of adaptive hyperparameter selection is thus governed by the number of features and the number of observations, rather than by the resolution of the output grid.

\subsection{Exact Conjugate Updates by Matheron}
\label{sec:update-not-diffuse}
Suppose the conditioning information splits into a conjugate part $\mathcal{D} = \{X_y, \mathbf{y}\}$, with $\mathbf{y} = \mathbf{f}(X_y) + \sigma_y\boldsymbol\varepsilon$, and a non-conjugate remainder $\mathcal{C}$.
In this case, the weight-space target factorises as
\begin{equation}
  \pi(\boldsymbol\xi_0 \mid \mathcal{C}, \mathcal{D}) \propto
  \mathcal{N}(\boldsymbol\xi_0; \mathbf{0}, \mathbf{I}_R)\;
   L_{\boldsymbol\theta}(\boldsymbol\xi_0)
  p\big(\mathcal{D} \mid g_{\boldsymbol\theta}(\boldsymbol\xi_0)\big).
  \label{eq:joint-target}
\end{equation}
One could incorporate \(\mathcal D\) as an additional guidance term, but the resulting drift becomes numerically unstable as \(t \to 0\), with spectral radius growing as \(\beta(t)\lambda_{\max}/\sigma_y^2\) (App.~\ref{app:stiff}); for small observation noise, explicit solvers then need thousands of steps.
Instead, we absorb \(\mathcal D\) analytically into the decoder, leaving the flow to account only for \(\mathcal C\).
This resembles the whitened form of \citet{moss_conditioning_2026}, which runs the flow relative to the GP posterior given \(\mathcal D\).
However, neither version of the existing whitenings suit our setting: in function space it requires an \(\mathcal{O}(N^3)\) factorisation of the posterior covariance, and in weight space it would make the reference non-isotropic.

In \textsc{FeatureFlow}, we instead augment the latent with $|X_y|$ observation-noise variables, so that conditioning on $\mathcal D$ is carried entirely by an affine decoder via Matheron's rule \citep{wilson_efficiently_2020}. Here, we turn a prior sample into a
posterior sample by adding a correction that depends on how far the sample
is from the data. With the finite-feature prior $g_{\boldsymbol\theta} (\boldsymbol\xi)
    =
    \mu+\boldsymbol\psi_{\boldsymbol\theta}^\top \boldsymbol\xi,
$
set $\boldsymbol\Phi = \boldsymbol \Psi_{\boldsymbol \theta}$ and $\boldsymbol\Phi_y=\boldsymbol\psi_{\boldsymbol\theta}(X_y)^\top$ denoting the prior
features at the evaluation locations, a sample from
the posterior given $\mathcal{D}$ is
\begin{equation}
\nonumber
  g_{\boldsymbol\theta}^{|\mathbf{y}}(\hat{\boldsymbol\xi})
  = \underbrace{g_{\boldsymbol\theta} (\boldsymbol\xi)}_{\text{prior sample}}
  + \underbrace{\boldsymbol\Phi\boldsymbol\Phi_y^\top\mathbf{A}_{\boldsymbol\theta}
    \big(\mathbf{y} - g_{\boldsymbol\theta} (\boldsymbol\xi)(X_y)
    - \sigma_y\boldsymbol\varepsilon\big)}_{\text{conjugate update term}},
  \label{eq:matheron-decoder}
\end{equation}
where $\hat{\boldsymbol\xi} = (\boldsymbol\xi, \boldsymbol\varepsilon)\sim\mathcal{N}(\mathbf 0,\mathbf I_{R+|X_y|})$ and $\mathbf{A}_{\boldsymbol\theta} = (\boldsymbol\Phi_y\boldsymbol\Phi_y^\top + \sigma_y^2\mathbf{I})^{-1}$. 
In App.~\ref{prop:update}, we show that $g_{\boldsymbol\theta}^{|\mathbf{y}}(\hat{\boldsymbol\xi})$ is distributed
according to the exact posterior of the finite-feature GP given $\mathcal{D}$. We can then apply the flow of Sec.~\ref{subsec:weight_flow} unchanged to this affine
decoder, with the latent enlarged by one dimension per observation.


\subsection{Hyperparameter Optimisation}
\label{sec: methods-hyperparameters}
Existing implementations of GP flows \citep{moss_conditioning_2026} assume known kernel hyperparameters, or else tune those parameters manually, an approach which is rarely applicable in practice.
Hyperparameter tuning via maximum likelihood estimation is typical for standard GP regression \citep{rasmussen_gaussian_2006}; however, no equivalent procedure has been proposed for the broader class of tasks targeted by GP flows.
%
%

In our more general setting, the natural quantity to optimise is the marginal likelihood, or evidence, of the condition.
%
In particular, we target the logarithm
\begin{equation}
    \begin{aligned}
    J(\boldsymbol\theta)
    &=
    \log Z_{\boldsymbol\theta}
    =
    \log
    \mathbb E_{\boldsymbol\xi\sim\mathcal N(\mathbf0,\mathbf I_R)}
    \left[
        L_{\boldsymbol\theta}(\boldsymbol\xi)
    \right]
    \\
    &=
    \log
    \mathbb E_{\boldsymbol\xi_t\sim\mathcal N(\mathbf 0,\mathbf I_R)}
    \left[
        \overline L_{t,\boldsymbol\theta}(\boldsymbol\xi_t)
    \right],
    \end{aligned}
    \label{eq:methods-hyperparameter-objective}
\end{equation}
where the second line follows from the law of total expectation under the stationary Gaussian reference.
If the conditioning information also contains a conjugate linear-Gaussian component $\mathcal D$, 
its marginal log-likelihood can be evaluated analytically and added to this objective; see App.~\ref{sec:evidence-decomp}.

The objective in \eqref{eq:methods-hyperparameter-objective} balances condition satisfaction against the relative-entropy cost of deforming the underlying GP law into the conditioned law; see App.~\ref{subsec:static-interpretation}.
This relative-entropy cost is, up to a boundary term, precisely the energy of the guidance drift.
Thus, equivalently, the objective penalises the amount of guidance required to realise the conditioned law; see App.~\ref{subsec:dynamic-interpretation}.

A principled approach to optimising Eq.~\eqref{eq:methods-hyperparameter-objective} is to use an outer-loop scheme.
Fix an initial value $\boldsymbol\theta^{(0)}$.
Then, at outer iteration $m$, run the conditional flow with $\boldsymbol\theta^{(m)}$ fixed, construct an estimate $\widehat G(\boldsymbol\theta^{(m)})$ of the evidence gradient, and update
\begin{equation}
    \boldsymbol\theta^{(m+1)}
    =
    \boldsymbol\theta^{(m)}
    +
    \eta_m
    \widehat G(\boldsymbol\theta^{(m)}),
    \label{eq:methods-outer-parameter-update}
\end{equation}
where $\eta_m>0$ is a step size.
In practice, this update can instead be performed using a general-purpose optimiser such as Adam.

For a flow solve with the parameter fixed, the terminal particles provide a direct estimator of the evidence gradient.
Indeed, at $t=0$, Fisher's identity gives
\begin{equation}
    \nabla_{\boldsymbol\theta}J(\boldsymbol\theta)
    =
    \mathbb E_{
        \boldsymbol\xi_0
        \sim
        \pi_{\boldsymbol\theta,R}(\cdot\mid\mathcal C)}
    \left[
        \nabla_{\boldsymbol\theta}
        \log
        L_{\boldsymbol\theta}(\boldsymbol\xi_0)
    \right].
    \label{eq:methods-endpoint-evidence-gradient}
\end{equation}
Thus, for terminal particles $\{\boldsymbol\xi_0^{(b)}\}_{b=1}^{B}$, we can estimate
\begin{equation}
    \widehat G_0(\boldsymbol\theta)
    =
    \tfrac{1}{B}
    \textstyle\sum_{b=1}^{B}
    \nabla_{\boldsymbol\theta}
    \log
    L_{\boldsymbol\theta}(\boldsymbol\xi_0^{(b)})
    \approx
    \nabla_{\boldsymbol\theta}J(\boldsymbol\theta).
    \label{eq:methods-endpoint-gradient-estimator}
\end{equation}
In fact, the same gradient can also be estimated at intermediate noising times using quantities already evaluated during sampling.
For any $t\in[0,1]$, Fisher's identity gives
\begin{equation}
    \nabla_{\boldsymbol\theta} J(\boldsymbol\theta)
    =
    \mathbb E_{
        \boldsymbol\xi_t
        \sim p_{t,\boldsymbol\theta}(\boldsymbol\xi_t\mid\mathcal C)}
    \big[
        \underbrace{
            \nabla_{\boldsymbol\theta}
            \log
            \overline L_{t,\boldsymbol\theta}(\boldsymbol\xi_t)
        }_{
            \mathbf a_{t,\boldsymbol\theta}(\boldsymbol\xi_t)
        }
    \big].
    \label{eq:methods-evidence-gradient}
\end{equation}
Thus, similar to the guidance $\mathbf h_{t,\boldsymbol\theta}$, parameter estimation requires a logarithmic derivative of the smoothed likelihood.
Using the same bridge construction and normalised likelihood weights as in Sec.~\ref{subsec:weight_flow}, the corresponding Monte Carlo estimate for particle $b$ is
\begin{equation}
    \widehat{\mathbf a}_{t,\boldsymbol\theta}
    (\boldsymbol\xi_t^{(b)})
    =
    \textstyle \sum_{s=1}^{S}
    \overline w_{b,s}
    \nabla_{\boldsymbol\theta}
    \log
    L_{\boldsymbol\theta}(\boldsymbol\xi_0^{(b,s)}),
    \label{eq:methods-mc-parameter-score}
\end{equation}
where $\overline w_{b,s}$ denotes the normalised likelihood weight associated with the $s$-th bridge sample for particle $b$.
Averaging over the particles gives
\begin{equation}
    \widehat G_t(\boldsymbol\theta)
    =
    \tfrac{1}{B}
    \textstyle \sum_{b=1}^{B}
    \widehat{\mathbf a}_{t,\boldsymbol\theta}
    (\boldsymbol\xi_t^{(b)})
    \approx
    \nabla_{\boldsymbol\theta}J(\boldsymbol\theta).
    \label{eq:methods-objective-alt-alt}
\end{equation}

In practice, the outer-loop scheme can be expensive, since the conditional flow must be rerun after each parameter update.
We thus also consider a faster online scheme, in which parameter updates  using the estimate in Eq.~\eqref{eq:methods-objective-alt-alt} are interleaved with a single conditional solve.
Since changing the parameters also changes the target conditional distribution and the corresponding guidance, now the current particles need not follow the instantaneous marginal $p_{t,\boldsymbol\theta}(\boldsymbol\xi_t\mid\mathcal C)$ in Eq.~\eqref{eq:methods-evidence-gradient}, introducing an additional path-dependent approximation.
In this case, after online optimisation, the conditional flow should be rerun with the fitted value of the parameter to obtain samples from the corresponding fitted conditional model; see App.~\ref{app: hyperparam-opt} for further details.

\section{Empirical Evaluation}

\subsection{Probabilistic Downscaling}
Probabilistic downscaling of spatial fields is an increasingly important use case for generative modelling \citep{peleg_advanced_2017, deshon_stochastic_2020, perezhogin_generative_2023, hess_fast_2025}.
The fundamental task is to produce probable, high-resolution fields, given some summary statistic measured on a coarser grid, often alongside direct measurements or prior knowledge (e.g. imposed non-linear physics).

We first generate a high-resolution ground-truth field by sampling an approximate posterior GP with a large number of random Fourier features (RFFs) ($R=4096$). 
This field is then coarse-grained into a pair of $8 \times 8$ grids using the (non-linear) areal $\min$ and $\max$ summary statistics, and $32$ random locations in the domain are queried to define an additional conjugate condition.
We use an RFF basis ($R=1024$) to approximate the RBF prior, augmented with an additional Matheron's rule block to exactly correct for the Gaussian observations, for a total latent dimension of $1056$.
We then generate conditional samples using a GP flow and \textsc{FeatureFlow}, under a fixed computational budget (proxied here by a maximum wall-time of $0.5$ s).

In Figs. \ref{fig: downscale-results}(b, c), we show one sample from the GP flow and \textsc{FeatureFlow}. 
Under the same computational constraint, \textsc{FeatureFlow} is able to produce samples of comparable quality (cf. Supplementary Fig. \ref{fig: downscaling-sample-quality}) at >$10\times$ as many evaluation points as the dense method. 
This translates to a wall-time speed-up of more than two orders of magnitude at high resolutions (Fig. \ref{fig: downscale-results}(d)).
Memory requirements are also far more limited, with the vanilla GP flow estimated to require $\sim$34 GB to build the GP Gram matrix alone, compared to the equivalent \textsc{FeatureFlow} requirement of $\sim$500 MB to store the basis design matrix, at the largest resolution considered.
Further details are found in App.~\ref{app: downscaling}.
\begin{figure*}
    \centering
    \includegraphics[width=0.9\linewidth]{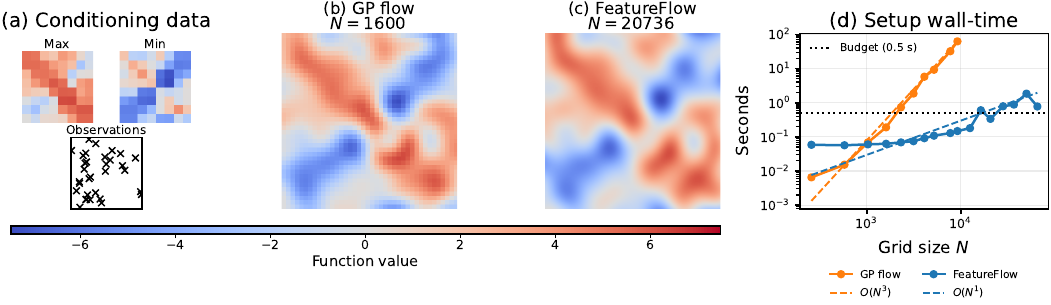}
    \vspace{-3mm}
    \caption{A non-linear downscaling task. \textbf{(a)} Non-conjugate ($\max$/$\min$) and conjugate (Gaussian observations) conditioning data. Single samples drawn from each of \textbf{(b)} the GP flow and \textsc{(c)} \textsc{FeatureFlow} under a fixed compute constraint. Both methods achieve mean absolute condition residuals $\sim 10^{-3}$. \textbf{(d)} Comparison of the two methods, showing the increase in the dominant setup wall-time with grid size $N$. The fixed time/compute budget used for \textbf{(b, c)} is indicated.}
    \label{fig: downscale-results}
    \vspace{-3mm}
\end{figure*}

\subsection{Allen-Cahn Solution Fields}

PDE solutions are commonly subject to boundary conditions (BCs), whose type and sufficiency determine the uniqueness of a solution \citep{evans10}.
In the dense GP flow framework, BCs can only be imposed inelegantly by conditioning on additional artificial noiseless data, resulting in a soft enforcement of the condition.
Here, we leverage the work of \citet{solin_know_2019} to use the Laplacian eigenfunction basis to enforce homogeneous Dirichlet BCs by construction, beyond the linear-Gaussian regime they demonstrate.

We draw samples from both a GP flow and \textsc{FeatureFlow}, conditioned such that they solve the Allen-Cahn equation:
\begin{equation}
\left\{
\begin{aligned}
f
    &= 0.2\,f^3
       - (2 \times 10^{-3})\,\Delta f,
    && \text{on } \Omega, \\
f
    &= 0,
    && \text{on } \partial\Omega,
\end{aligned}
\right.
\end{equation}
where $\Omega \subseteq \mathbb{R}^2$ is some irregular domain and $\Delta$ is the 2-D Laplacian operator.
The condition is formulated by providing a differentiable relaxation to the condition that the PDE residual should vanish everywhere.
Here, we exploit that for a stationary prior encoded by a differentiable basis, derivatives of samples may be computed analytically without the need for a finite-difference scheme as
$
\smash{
    \partial_{x_d}\widetilde f(x)
    =
    \partial_{x_d}\mu(x)
    +
    (\partial_{x_d}\boldsymbol\psi(x))^\top\boldsymbol \xi
}
$.
The Allen-Cahn residuals can therefore be evaluated analytically in $\mathcal O(RN)$ time in \textsc{FeatureFlow}.
We impose the Dirichlet boundary condition using artificial data for the GP flow, and via an appropriate Laplacian basis for the weight-space flow (App.~\ref{app: laplacian-basis}). 

In Fig. \ref{fig: pde-results}, we show results of our online adaptive sampling scheme applied to both sampling methods. 
A ground truth solution of the PDE with the given boundary conditions was computed using a numerical scheme (damped Newton iterations using exact sparse Jacobian), providing the 16 noisy ($\sigma_n = 10^{-2}$) Gaussian observations shown across the irregular domain in Fig. \ref{fig: pde-results}(a).
We show samples drawn from both samplers, where in each case, we first use deliberately misspecified kernel hyperparameters (Figs. \ref{fig: pde-results}(b, f)), and then use the adaptive scheme (Figs. \ref{fig: pde-results}(c, g)). 
We obtain higher-quality samples than those resulting from the GP flow, as measured by condition residuals (Figs. \ref{fig: pde-results}(d, h)), while benefiting from more than an order of magnitude speed-up at the highest resolutions (Figs. \ref{fig: pde-results}(e)).
In particular, the boundary conditions are adhered to much more closely using the Laplacian eigenfunction features, and the PDE residuals are generally reduced using the adaptive sampling scheme in feature space.

We hypothesise that the improved PDE consistency is due to the removal of finite-difference discretisation error when evaluating the pointwise likelihood.
Hyperparameter estimation is also more stable using \textsc{FeatureFlow} in this experiment; in Supplementary Fig. \ref{fig: pde-param-histories}, we see substantial disagreement between runs with different resolutions with the vanilla GP flow, which is minimal using our stable weight-space formulation.
Further experimental details are provided in App.~\ref{app:pde}.

\subsection{Sea Level Anomaly Estimation}
\label{sec: sla-estimation}

We now use \textsc{FeatureFlow} in a real-data example to recover sea-level anomaly (SLA) fields from observational satellite data.
Accurate SLA estimates are crucial for understanding large-scale ocean surface currents, impacting transport and navigation, as well as environmental mapping \citep{yu_prediction_2026}.
We assume a ground-truth SLA field from the Integrated Marine Observing System, Australia, which is computed using a numerical model \citep{IMOS}, to generate a synthetic Level 0 dataset: completely raw data at instrument resolution. 
%
Specifically, let $f: \mathcal{X} \rightarrow \mathbb{R}$ be a field representing the SLA over some irregular domain $\mathcal{X} \subset \mathbb{R}^2$. Here, $\mathcal{X}$ corresponds to the ocean region off the southeastern Australian coast.
Our aim is to generate probable samples of $f$ given sparse, noisy, non-Gaussian and non-linear measurements from satellite observations (cf. Figs. \ref{fig: altimeter-fields}(a, e)).


Raw satellite measurements are dependent on the true SLA through a probabilistic forward model, which additionally depends on four physical variables---significant wave height $\mathrm{SWH}$, surface wind speed $U_{10}$, surface skewness $\lambda$, and reflectivity bias $\beta$---which we will treat as known in this work.
Typically a satellite will observe at $\sim$20 Hz a radar echo comprising $\sim$100 observations of range gates (Fig. \ref{fig: altimeter-fields}(e)). The expected waveform at each range gate location $k$ is given by a five-parameter Brown's function (see App.~\ref{app: forward-model} for definition) denoted $m_{jk}$ for the $j$th observation:
\begin{equation*}
    \left\{f(\mathbf{x}_j),\,\mathrm{SWH}_j,\,U_{10,j},\,\lambda_j,\,\beta_j\right\}
    \mapsto
    \left\{m_{jk}\right\}
\end{equation*}
shown in Fig. \ref{fig: altimeter-fields}(e) by the black line.
The observation noise surrounding $m_{jk}$ is Gamma-distributed as
\begin{equation*}
    Y_{jk}\mid m_{jk}
    \sim
    \operatorname{Gamma}\!\left(
    N_l,\frac{m_{jk}}{N_l}
    \right),
    \quad
    N_l=90.
\end{equation*}
See \cite{tourain_benefits_2021}, for more details.
%
%
%
A commonly-used approach to estimating SLA is Maximum Likelihood Estimator 3  \citep[MLE3;][]{brown_average_1977, rodriguez_altimetry_1988}. 
Figure \ref{fig: altimeter-fields}(a) shows a single day's worth of satellite observation tracks over southeastern Australia; in total there were 63,458 radar echoes, each comprising 104 range gates.
%

\begin{figure*}[h!]
    \centering
    \includegraphics[width=\linewidth]{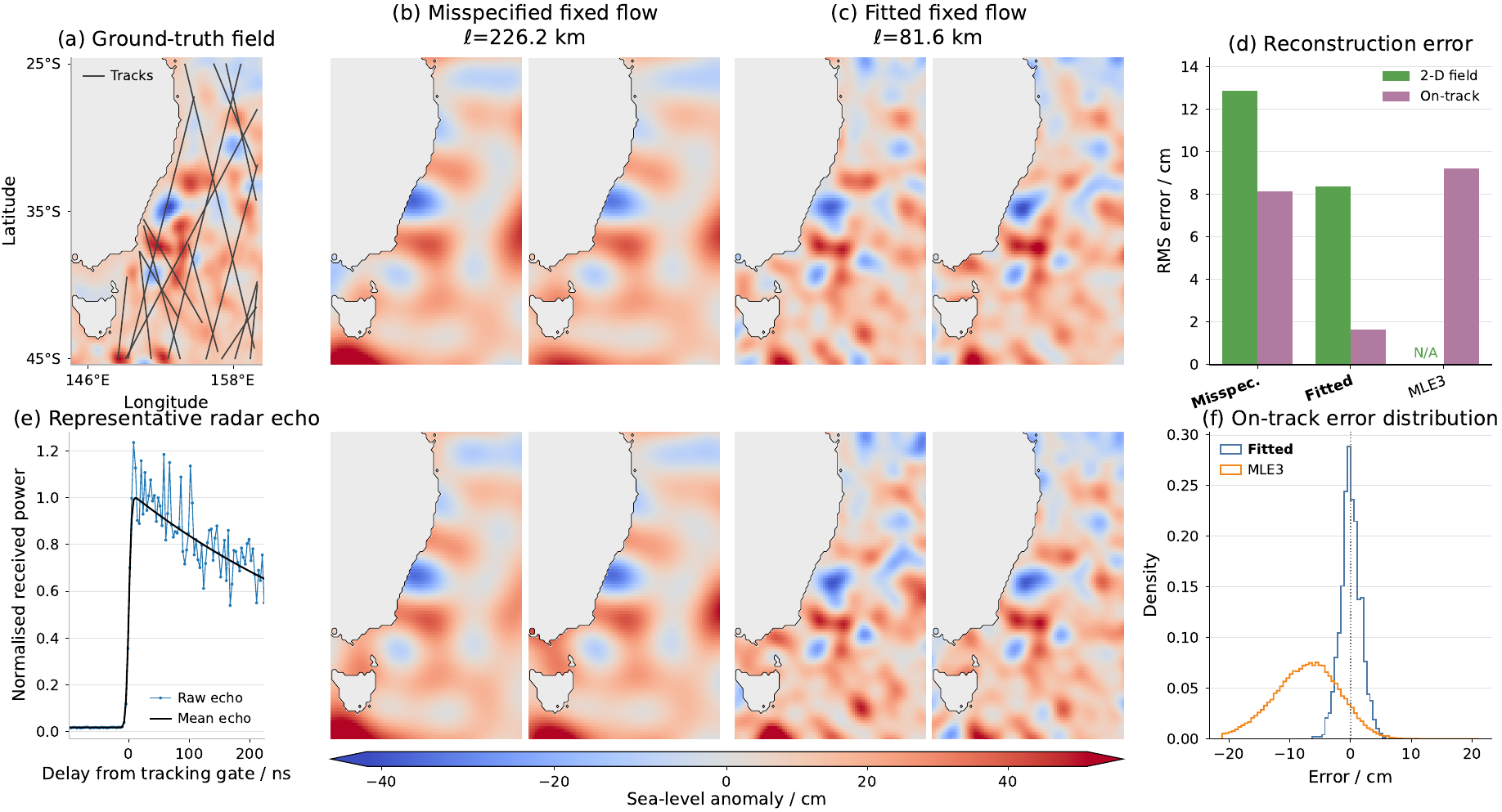}
    \vspace{-5mm}
    \caption{Inference of a sea-level anomaly field. \textbf{(a)} The ground-truth field, with satellite tracks overlaid. Samples using \textsc{FeatureFlow} with \textbf{(b)} misspecified and \textbf{(c)} fitted kernel length scales $\ell$. \textbf{(d)} The reconstruction error compared between \textsc{FeatureFlow} and the baseline. MLE3 only provides pointwise estimates and so cannot be compared across the 2-D field. \textbf{(e)} A representative radar echo, taken at 20 Hz along the satellite tracks. \textbf{(f)} Comparison of on-track reconstruction error between the deterministic baseline and scalable flow-based inference.}
    \label{fig: altimeter-fields}
    \vspace{-5mm}
\end{figure*}

Let $j=1,\ldots,J$ index the $J=63{,}458$ radar echoes, each at location $\mathbf{x}_j \in \mathcal{X}$, and $k=1,\ldots,K$ index the $K=104$ range gates. The complete observation is
$
\smash{
    \{Y_{jk}\}_{j=1:J, k=1:K}
    \in \mathbb{R}^{63{,}458\times104}.
}$
%
%
We aim to draw SLA field samples consistent with the noisy radar data, as well as enforcing an approximate RBF prior on the field through an RFF kernel approximation.
In the following results, we use $R=512$ RFFs and $100$ steps in the flow time discretisation.
The entire evaluation domain is $69{,}081$ points, a scale which is computationally infeasible using a GP flow.
We consider the adaptive setting, where the length scale is unknown \textit{a priori} and must therefore be learned from the non-Gaussian condition.
In Fig. \ref{fig: altimeter-fields}(a, b), we display the ground truth SLA field, as well as samples drawn from \textsc{FeatureFlow} with a misspecified kernel length scale. 
Figure \ref{fig: altimeter-fields}(c) shows the result of adaptive hyperparameter estimation on this real-world problem, where the complex non-Gaussian observations prove highly informative and result in visually higher-quality samples of the entire 2-D field.

Measurements of on-track and whole-field RMSE (Fig. \ref{fig: altimeter-fields}(d)) confirm that making joint predictions of SLA across all points in the domain, given all satellite measurements, provides a meaningful advantage in pointwise accuracy.
Figure \ref{fig: altimeter-fields}(f) compares the on-track error distributions between the fitted \textsc{FeatureFlow} and deterministic pointwise baseline, MLE3. 
We note that our estimates of SLA are substantially less biased than MLE3.
As previous work has aimed to reduce MLE3-related bias \citep{gomez-enri_measuring_2007, passaro_improving_2018}, in Supplementary Fig. \ref{fig: altimeter-debiased-errors}, we show that even with 
\textit{post hoc} debiasing using the ground truth field, \textsc{FeatureFlow} performs better in pointwise accuracy. 
Further experimental details are located in App.~\ref{app:sla-estimation}.
%


\section{Conclusion}
\label{sec:conclusion}

We have introduced \textsc{FeatureFlow}, which runs GP flows in the weight space of a finite-feature approximation.
This decouples the cost of the flow from the evaluation resolution and absorbs conjugate data exactly via Matheron's rule.
Because the bridge is independent of the kernel hyperparameters, the guidance samples also yield evidence gradients, enabling the first hyperparameter optimisation for GP flows.
\textsc{FeatureFlow} gives one to two orders of magnitude speed-up over function-space GP flows and scales to a 69,081-point sea-level field inferred from over six million radar observations.
Limitations include the reliance on a largely stationary prior with an accurate finite basis and the bias of the online adaptation scheme.
Future work includes jointly estimating the altimetry nuisance fields and characterising this bias.


\clearpage
\subsection*{AI Use Statement}

In this work, we used generative AI tools for providing feedback on research methodology or experiments, implementing methods, and generating synthetic data sets. We have not used generative AI tools for helping to develop theoretical models or conceptual frameworks; formulating mathematical claims; providing critical ingredients for proving mathematical claims; assisting in the writing of proofs; proposing or refining hypotheses; assisting with translation; cleaning and reformatting datasets; supporting qualitative and thematic data analysis; interpreting results. 
Additionally, we used generative AI tools for creating scientific figures, creating or editing software code, brainstorming, and sourcing/searching for information. 
We have reviewed all AI-assisted work. 
LLM-generated code was verified and tested for correctness.
We take responsibility for the final content of this work, including text, claims, or artifacts produced with the aid of generative AI.



\bibliographystyle{plainnat}
\bibliography{references}

\clearpage
\appendix
\thispagestyle{empty}
\onecolumn
\aistatstitle{Feature Space Adaptation for Effortless Gaussian Process Flows: \\
Supplementary Materials}

\section{Basis Decompositions}
\label{app: bases}
\subsection{Random Fourier features}
\label{app: rffs}

Random Fourier features (RFFs) provide a finite-dimensional approximation to stationary kernels \citep{rahimi_random_2007}. 
By Bochner's theorem, a stationary kernel $k_{\boldsymbol\theta} (x,x') = k_{\boldsymbol\theta} (x-x')$ can be written as an expectation over its spectral density $p_{\boldsymbol\theta}(\omega)$. 
Drawing frequencies $\omega_r\sim p_{\boldsymbol\theta}$ and phases $b_r\sim\operatorname{Uniform}(0,2\pi)$, we define
\begin{equation}
    \psi_{\boldsymbol\theta}^{(r)}(x)=\sqrt{\frac{2\sigma_f^2}{R}}\cos\left(\omega_r x+b_r\right),\qquad r=1,\ldots,R
\end{equation}
with $R$ defined as the number of features used to approximate the kernel.
The resulting feature vector satisfies
\begin{equation}
    k_{\boldsymbol\theta}(x,x') \approx \boldsymbol\psi_{\boldsymbol\theta}(x)^\top \boldsymbol\psi_{\boldsymbol\theta}(x'),
\end{equation}
and an approximate GP prior draw is
\begin{equation}
    \widetilde f_{\boldsymbol\theta}(x)=\mu(x) + \boldsymbol\psi_{\boldsymbol\theta}(x)^\top\boldsymbol \xi,\qquad\boldsymbol \xi \sim \mathcal{N} (\mathbf 0,\mathbf I_{R}).
\end{equation}

For the RBF kernel with length scale matrix $\boldsymbol\Lambda_{\boldsymbol\theta}$, the spectral density is Gaussian,
\begin{equation}
    \omega_r \sim \mathcal{N}\left(\mathbf 0, \boldsymbol\Lambda_{\boldsymbol\theta}^{-1}\right).
\end{equation}

Fixing the underlying standard-normal frequency draws while varying $\boldsymbol\theta$ gives a differentiable parameterisation of the features. 
Spatial derivatives are also available analytically by differentiating the trigonometric basis. 
Although RFFs directly approximate only the stationary prior kernel, they can be combined with exact kernel evaluations in the decoupled posterior construction of Section ~\ref{app:update}.

\subsection{Laplacian eigenfunction basis}
\label{app: laplacian-basis}
Laplacian eigenfunctions provide a deterministic low-rank approximation to stationary kernels while allowing boundary conditions to be imposed on a bounded domain $\Omega\subset\mathbb R^d$ \citep{solin_know_2019}. 
For homogeneous Dirichlet boundary conditions, the basis functions are obtained from the eigenvalue problem
\begin{equation}
    -\Delta\varphi_j(x)=\lambda_j^2 \varphi_j(x),
    \qquad x\in\Omega,
    \qquad
    \varphi_j(x)=0,
    \qquad x\in\partial\Omega,
\end{equation}
where the eigenfunctions are orthonormal in $L^2(\Omega)$ and ordered by increasing eigenvalue.

Let $s_{\boldsymbol\theta}(\boldsymbol\omega)$ denote the spectral density of the stationary kernel. 
For an isotropic kernel, its value depends only on the radial frequency, and the covariance function can be approximated using the first $R$ eigenfunctions as
\begin{equation}
    k_{\boldsymbol\theta}(x,x')
    \approx
    \sum_{j=1}^{R_f}
    s_{\boldsymbol\theta}(\lambda_j)
    \varphi_j(x)\varphi_j(x').
\end{equation}
Defining the weighted features
\begin{equation}
    \psi_{\boldsymbol\theta}^{(j)}(x)
    =
    \sqrt{s_{\boldsymbol\theta}(\lambda_j)}\,\varphi_j(x),
    \qquad j=1,\ldots,R,
\end{equation}
gives
\begin{equation}
    k_{\boldsymbol\theta}(x,x')
    \approx
    \boldsymbol\psi_{\boldsymbol\theta}(x)^\top
    \boldsymbol\psi_{\boldsymbol\theta}(x'),
\end{equation}
and the corresponding approximate prior draw is
\begin{equation}
    \widetilde f_{\boldsymbol\theta}(x)
    =
    \mu(x)
    +
    \boldsymbol\psi_{\boldsymbol\theta}(x)^\top\boldsymbol \xi,
    \qquad
    \boldsymbol \xi\sim\mathcal N(\mathbf0,\mathbf I_{R_f}).
\end{equation}

For simple domains, such as intervals and rectangles, the eigenpairs are available analytically. 
For irregular domains, they can instead be computed once by discretising the Laplacian operator on a grid and solving the resulting sparse eigenvalue problem. 
Restricting the discretisation to points inside $\Omega$ naturally imposes the boundary condition and ensures that every approximate prior draw vanishes on $\partial\Omega$. 
Other conditions, including homogeneous Neumann conditions, can be incorporated by changing the boundary condition in the eigenvalue problem.

The eigenfunctions and eigenvalues depend only on the domain and can therefore be held fixed while varying $\boldsymbol\theta$; the kernel hyperparameters enter only through the spectral weights $s_{\boldsymbol\theta}(\lambda_j)$. 
This gives a differentiable parameterisation with respect to the kernel hyperparameters, while replacing dense kernel operations by computations involving $R$ basis functions.

\section{Conditioning on Conjugate Data by Matheron's Rule}
\label{app:update}

Throughout this section, the conjugate data are $\mathcal{D} = \{X_y, \mathbf{y}\}$ with
$\mathbf{y} = f(X_y) + \sigma_y\boldsymbol\varepsilon$ and
$\boldsymbol\varepsilon \sim \mathcal{N}(\mathbf{0}, \mathbf{I}_{|X_y|})$.
We write $\boldsymbol\Phi=\psi_{\boldsymbol\theta}(X)^\top\in\mathbb R^{N\times R}$ and $\boldsymbol\Phi_y=\psi_{\boldsymbol\theta}(X_y)^\top \in\mathbb R^{|X_y|\times R}$ for the prior features at the evaluation and observation locations, and $\boldsymbol\mu_y = \mu(X_y)$.
Under the finite-feature prior, $\mathbf{f} = \boldsymbol\mu + \boldsymbol\Phi\boldsymbol\xi$ and
$\mathbf{y} = \boldsymbol\mu_y + \boldsymbol\Phi_y\boldsymbol\xi + \sigma_y\boldsymbol\varepsilon$ with $\boldsymbol\xi \sim \mathcal{N}(\mathbf{0}, \mathbf{I}_{R})$.
Finally, define
\[
  \mathbf{A}_{\boldsymbol\theta} = \big(\boldsymbol\Phi_y\boldsymbol\Phi_y^\top + \sigma_y^2\mathbf{I}\big)^{-1},
  \qquad
  \mathbf{P} = \boldsymbol\Phi_y^\top\mathbf{A}_{\boldsymbol\theta}\boldsymbol\Phi_y .
\]

\subsection{Matheron's rule targets the posterior}
\label{app:matheron}

Let $\hat{\boldsymbol\xi} = (\boldsymbol\xi, \boldsymbol\varepsilon) \in \mathbb{R}^{R + |X_y|}$, denoting the total latent dimension.
Expanding the decoder in Sec. \ref{sec:update-not-diffuse} using $g_{\boldsymbol\theta}(\boldsymbol\xi)(X_y) = \boldsymbol\mu_y + \boldsymbol\Phi_y\boldsymbol\xi$ shows that it is affine in $\hat{\boldsymbol\xi}$:
\begin{equation}
  g^{|\mathbf{y}}_{\boldsymbol\theta}(\hat{\boldsymbol\xi})
  = \boldsymbol\mu + \boldsymbol\Phi\boldsymbol\xi
    + \boldsymbol\Phi\boldsymbol\Phi_y^\top\mathbf{A}_{\boldsymbol\theta}
      \big(\mathbf{y} - \boldsymbol\mu_y - \boldsymbol\Phi_y\boldsymbol\xi - \sigma_y\boldsymbol\varepsilon\big)
  = \mathbf{m}_{|\mathbf{y}} + \boldsymbol\Psi_{|\mathbf{y}}\hat{\boldsymbol\xi},
  \label{eq:matheron-affine}
\end{equation}
where
\[
  \mathbf{m}_{|\mathbf{y}} = \boldsymbol\mu + \boldsymbol\Phi\boldsymbol\Phi_y^\top\mathbf{A}_{\boldsymbol\theta}(\mathbf{y} - \boldsymbol\mu_y),
  \qquad
  \boldsymbol\Psi_{|\mathbf{y}} = \big[\,\boldsymbol\Phi(\mathbf{I} - \mathbf{P}),\; -\sigma_y\boldsymbol\Phi\boldsymbol\Phi_y^\top\mathbf{A}_{\boldsymbol\theta}\,\big]\in\mathbb R^{N\times(R+|X_y|)}.
\]

\begin{proposition}
\label{prop:update}
Assume $\mathcal{C}$ and $\mathcal{D}$ are conditionally independent given $f$.
If $\hat{\boldsymbol\xi} \sim \mathcal{N}(\mathbf{0}, \mathbf{I}_{R + |X_y|})$, then
$g^{|\mathbf{y}}_{\boldsymbol\theta}(\hat{\boldsymbol\xi})$ is distributed according to the exact posterior of the finite-feature GP given $\mathcal{D}$.
Consequently, the latent target
\begin{equation}
  \pi^{|\mathbf{y}}(\hat{\boldsymbol\xi}_0 \mid \mathcal{C}) \;\propto\;
  \mathcal{N}(\hat{\boldsymbol\xi}_0; \mathbf{0}, \mathbf{I}_{R + |X_y|})\;
  p\big(\mathcal{C} \mid g^{|\mathbf{y}}_{\boldsymbol\theta}(\hat{\boldsymbol\xi}_0)\big)
  \label{eq:updated-target}
\end{equation}
pushes forward under $g^{|\mathbf{y}}_{\boldsymbol\theta}$ to the same law over functions as Eq.~\eqref{eq:joint-target} does under $g_{\boldsymbol\theta}$.
\end{proposition}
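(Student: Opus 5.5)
The proof has two parts: the Gaussian identity behind Matheron's rule, and a pushforward argument that transfers the non-conjugate tilt from one latent representation to the other.

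\textbf{Step 1: the pushed-forward law is the exact posterior.} Since $g^{|\mathbf y}_{\boldsymbol\theta}(\hat{\boldsymbol\xi})=\mathbf m_{|\mathbf y}+\boldsymbol\Psi_{|\mathbf y}\hat{\boldsymbol\xi}$ is affine in a standard normal vector, it is Gaussian with mean $\mathbf m_{|\mathbf y}$ and covariance $\boldsymbol\Psi_{|\mathbf y}\boldsymbol\Psi_{|\mathbf y}^\top$. The mean already equals the standard conditional mean $\boldsymbol\mu+\boldsymbol\Phi\boldsymbol\Phi_y^\top\mathbf A_{\boldsymbol\theta}(\mathbf y-\boldsymbol\mu_y)$ of the jointly Gaussian pair $(\mathbf f,\mathbf y)$.

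For the covariance, I would expand
\[
\boldsymbol\Phi(\mathbf I-\mathbf P)^2\boldsymbol\Phi^\top+\sigma_y^2\boldsymbol\Phi\boldsymbol\Phi_y^\top\mathbf A_{\boldsymbol\theta}^2\boldsymbol\Phi_y\boldsymbol\Phi^\top .
\]
Using $\mathbf P^2=\boldsymbol\Phi_y^\top\mathbf A_{\boldsymbol\theta}(\mathbf A_{\boldsymbol\theta}^{-1}-\sigma_y^2\mathbf I)\mathbf A_{\boldsymbol\theta}\boldsymbol\Phi_y=\mathbf P-\sigma_y^2\boldsymbol\Phi_y^\top\mathbf A_{\boldsymbol\theta}^2\boldsymbol\Phi_y$, the two $\sigma_y^2$ terms cancel. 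The covariance then reduces to $\boldsymbol\Phi(\mathbf I-\mathbf P)\boldsymbol\Phi^\top$. This is the Schur complement $\boldsymbol\Phi\boldsymbol\Phi^\top-\boldsymbol\Phi\boldsymbol\Phi_y^\top(\boldsymbol\Phi_y\boldsymbol\Phi_y^\top+\sigma_y^2\mathbf I)^{-1}\boldsymbol\Phi_y\boldsymbol\Phi^\top$, which is the exact posterior covariance. The identity holds for any evaluation set $X$, so the claim holds for all finite-dimensional marginals. This cancellation is the only real computation and is the step most prone to error.

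\textbf{Step 2: the tilted targets agree.} Write $\nu_{\mathcal D}$ for the posterior law of $\mathbf f$ given $\mathcal D$ under the finite-feature prior. By Step 1, $\nu_{\mathcal D}$ is the pushforward of $\mathcal N(\mathbf 0,\mathbf I_{R+|X_y|})$ under $g^{|\mathbf y}_{\boldsymbol\theta}$.

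Now take any bounded measurable test function $\phi$ of the function values. Under the target in \eqref{eq:updated-target},
\[
\mathbb E[\phi(g^{|\mathbf y}_{\boldsymbol\theta}(\hat{\boldsymbol\xi}_0))]\propto\int\phi(\mathbf f)\,p(\mathcal C\mid\mathbf f)\,\nu_{\mathcal D}(\mathrm d\mathbf f),
\]
which is a change of variables, since the weight depends on $\hat{\boldsymbol\xi}_0$ only through $g^{|\mathbf y}_{\boldsymbol\theta}(\hat{\boldsymbol\xi}_0)$.

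Under \eqref{eq:joint-target}, the pushforward of $\mathcal N(\boldsymbol\xi_0;\mathbf0,\mathbf I_R)\,p(\mathcal D\mid g_{\boldsymbol\theta}(\boldsymbol\xi_0))$ under $g_{\boldsymbol\theta}$ is, by Bayes' rule, proportional to $\nu_{\mathcal D}$. Its further tilt by $L_{\boldsymbol\theta}=p(\mathcal C\mid g_{\boldsymbol\theta}(\cdot))$ therefore gives $\int\phi\,p(\mathcal C\mid\mathbf f)\,\nu_{\mathcal D}(\mathrm d\mathbf f)$ up to normalisation.

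Conditional independence of $\mathcal C$ and $\mathcal D$ given $f$ is what justifies writing the joint likelihood as the product $p(\mathcal C\mid f)\,p(\mathcal D\mid f)$ in \eqref{eq:joint-target}. The two normalised laws are therefore equal.
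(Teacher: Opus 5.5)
Your proposal is correct and follows the paper's proof essentially step for step. The first part uses the same affine-Gaussian moment matching, where the identity $\mathbf P^2=\mathbf P-\sigma_y^2\boldsymbol\Phi_y^\top\mathbf A_{\boldsymbol\theta}^2\boldsymbol\Phi_y$ collapses the covariance to $\boldsymbol\Phi(\mathbf I-\mathbf P)\boldsymbol\Phi^\top$; the second part is the same bounded-test-function argument showing that both tilted laws reduce to $\mathbb E_{\mathbf f\sim p_{\boldsymbol\theta}(\cdot\mid\mathcal D)}[\varphi(\mathbf f)\,p(\mathcal C\mid\mathbf f)]$ up to normalisation. Your remark that the covariance identity holds for every evaluation set $X$, and hence for all finite-dimensional marginals, is a small tightening the paper leaves implicit.
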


\begin{proof}
\emph{Posterior.}
Since $g^{|\mathbf{y}}_{\boldsymbol\theta}$ is affine, $g^{|\mathbf{y}}_{\boldsymbol\theta}(\hat{\boldsymbol\xi})$ is Gaussian with mean $\mathbf{m}_{|\mathbf{y}}$ and covariance $\boldsymbol\Psi_{|\mathbf{y}}\boldsymbol\Psi_{|\mathbf{y}}^\top$; it suffices to match these to the finite-feature posterior.
Writing $\widetilde{\mathbf{K}} = \boldsymbol\Phi\boldsymbol\Phi^\top$, $\widetilde{\mathbf{K}}_{\cdot y} = \boldsymbol\Phi\boldsymbol\Phi_y^\top$ and $\widetilde{\mathbf{K}}_{yy} = \boldsymbol\Phi_y\boldsymbol\Phi_y^\top$, the mean is
$\mathbf{m}_{|\mathbf{y}} = \boldsymbol\mu + \widetilde{\mathbf{K}}_{\cdot y}(\widetilde{\mathbf{K}}_{yy} + \sigma_y^2\mathbf{I})^{-1}(\mathbf{y} - \boldsymbol\mu_y)$, the posterior mean.
For the covariance, the identity $\boldsymbol\Phi_y\boldsymbol\Phi_y^\top = \mathbf{A}_{\boldsymbol\theta}^{-1} - \sigma_y^2\mathbf{I}$ gives
\[
  \mathbf{P}^2
  = \boldsymbol\Phi_y^\top\mathbf{A}_{\boldsymbol\theta}\big(\mathbf{A}_{\boldsymbol\theta}^{-1} - \sigma_y^2\mathbf{I}\big)\mathbf{A}_{\boldsymbol\theta}\boldsymbol\Phi_y
  = \mathbf{P} - \sigma_y^2\boldsymbol\Phi_y^\top\mathbf{A}_{\boldsymbol\theta}^2\boldsymbol\Phi_y ,
\]
so that $(\mathbf{I} - \mathbf{P})^2 + \sigma_y^2\boldsymbol\Phi_y^\top\mathbf{A}_{\boldsymbol\theta}^2\boldsymbol\Phi_y = \mathbf{I} - \mathbf{P}$.
Hence
\[
  \boldsymbol\Psi_{|\mathbf{y}}\boldsymbol\Psi_{|\mathbf{y}}^\top
  = \boldsymbol\Phi\big[(\mathbf{I} - \mathbf{P})^2 + \sigma_y^2\boldsymbol\Phi_y^\top\mathbf{A}_{\boldsymbol\theta}^2\boldsymbol\Phi_y\big]\boldsymbol\Phi^\top
  = \boldsymbol\Phi(\mathbf{I} - \mathbf{P})\boldsymbol\Phi^\top
  = \widetilde{\mathbf{K}} - \widetilde{\mathbf{K}}_{\cdot y}\big(\widetilde{\mathbf{K}}_{yy} + \sigma_y^2\mathbf{I}\big)^{-1}\widetilde{\mathbf{K}}_{y\cdot},
\]
which is the posterior covariance.

\emph{Pushforward.}
Let $\nu = \mathcal{N}(\mathbf{0}, \mathbf{I})$ and let $\varphi$ be any bounded measurable test function.
Under Eq.~\eqref{eq:joint-target}, and using the conditional independence of $\mathcal{C}$ and $\mathcal{D}$ given $f$,
\[
  \mathbb{E}\big[\varphi(g_{\boldsymbol\theta}(\boldsymbol\xi_0))\big]
  \;\propto\; \mathbb{E}_{\mathbf{f}\sim p_{\boldsymbol\theta}}\big[\varphi(\mathbf{f})\,p(\mathcal{D}\mid\mathbf{f})\,p(\mathcal{C}\mid\mathbf{f})\big]
  \;\propto\; \mathbb{E}_{\mathbf{f}\sim p_{\boldsymbol\theta}(\cdot\mid\mathcal{D})}\big[\varphi(\mathbf{f})\,p(\mathcal{C}\mid\mathbf{f})\big].
\]
Under Eq.~\eqref{eq:updated-target}, by the first part,
\[
  \mathbb{E}\big[\varphi(g^{|\mathbf{y}}_{\boldsymbol\theta}(\hat{\boldsymbol\xi}_0))\big]
  \;\propto\; \mathbb{E}_{\hat{\boldsymbol\xi}\sim\nu}\big[\varphi(g^{|\mathbf{y}}_{\boldsymbol\theta}(\hat{\boldsymbol\xi}))\,p(\mathcal{C}\mid g^{|\mathbf{y}}_{\boldsymbol\theta}(\hat{\boldsymbol\xi}))\big]
  = \mathbb{E}_{\mathbf{f}\sim p_{\boldsymbol\theta}(\cdot\mid\mathcal{D})}\big[\varphi(\mathbf{f})\,p(\mathcal{C}\mid\mathbf{f})\big].
\]
Both expressions are proportional to the same functional of $\varphi$, and both are normalised, so the two laws coincide.
\end{proof}

\subsection{Diffusing under conjugacy is stiff}
\label{app:stiff}
Suppose that we instead choose to treat $\mathcal{D}$ as an additional guidance term. Then the smoothed data likelihood $\nabla_{\boldsymbol\xi_t}\log p_t(\mathbf{y}\mid\boldsymbol\xi_t)$ is therefore Gaussian,
\[
  p_t(\mathbf{y}\mid\boldsymbol\xi_t)
  = \mathcal{N}\big(\mathbf{y};\ \boldsymbol\mu_y + \alpha\boldsymbol\Phi_y\boldsymbol\xi_t,\ \mathbf{M}_t\big),
  \qquad
  \mathbf{M}_t = (1-\alpha^2)\boldsymbol\Phi_y\boldsymbol\Phi_y^\top + \sigma_y^2\mathbf{I},
\]
and its score is available in closed form, with Hessian
\[
  \mathbf{H}_t
  = \nabla^2_{\boldsymbol\xi_t}\log p_t(\mathbf{y}\mid\boldsymbol\xi_t)
  = -\alpha^2\boldsymbol\Phi_y^\top\mathbf{M}_t^{-1}\boldsymbol\Phi_y ,
\]
which does not depend on $\boldsymbol\xi_t$.
Let $\lambda_1 \ge \lambda_2 \ge \dots$ denote the eigenvalues of $\boldsymbol\Phi_y\boldsymbol\Phi_y^\top$, which are also the nonzero eigenvalues of $\boldsymbol\Phi_y^\top\boldsymbol\Phi_y$.
Since $\mathbf{M}_t$ shares eigenvectors with $\boldsymbol\Phi_y\boldsymbol\Phi_y^\top$, the nonzero eigenvalues of $\mathbf{H}_t$ are
\[
  -\frac{\alpha^2\lambda_i}{(1-\alpha^2)\lambda_i + \sigma_y^2}
  \;\longrightarrow\; -\frac{\lambda_i}{\sigma_y^2}
  \qquad \text{as } t \to 0 .
\]

\paragraph{Step-size restriction.}
Written in forward integration time, the reverse-time drift is $-\tfrac12\beta(t)\boldsymbol\xi_t + \beta(t)\mathbf{h}_t$, whose Jacobian is $\beta(t)\big(\mathbf{H}_t - \tfrac12\mathbf{I}\big)$.
Along the $i$th data direction, the drift therefore contracts at rate
\[
  \kappa_i(t) = \beta(t)\left(\frac12 + \frac{\alpha^2\lambda_i}{(1-\alpha^2)\lambda_i + \sigma_y^2}\right)
  \;\approx\; \frac{\beta(t)\,\lambda_i}{\sigma_y^2}
  \qquad \text{near } t = 0 .
\]
An explicit solver such as Euler--Maruyama is stable on a linear contraction of rate $\kappa$ only if $\Delta t < 2/\kappa$, which near $t = 0$ requires
\begin{equation}
  \Delta t \;\lesssim\; \frac{2\sigma_y^2}{\beta(t)\,\lambda_{1}} .
  \label{eq:step-bound}
\end{equation}


\section{Hyperparameter Optimisation}
\label{app: hyperparam-opt}

\subsection{Evidence Decomposition with Conjugate Information}
\label{sec:evidence-decomp}

In Sec.~\ref{sec: methods}, we note that when the conditioning statement $\mathcal{C}$ contains some linear-Gaussian component, we can extract this part of the likelihood and compute it analytically.
Let $\mathcal D$ denote the conjugate conditioning information (e.g. noisy point observations with Gaussian likelihood), and let $\mathcal C$ denote the remaining non-conjugate condition.
The exact GP evidence decomposes as
\begin{equation}
    \log p_{\boldsymbol\theta}(\mathcal C,\mathcal D)
    =
    \log p_{\boldsymbol\theta}(\mathcal C\mid\mathcal D)
    +
    \log p_{\boldsymbol\theta}(\mathcal D).
    \label{eq:methods-objective-decomposition}
\end{equation}
Let $g_{\boldsymbol\theta}^{\mid y}$ denote the decoder obtained after conditioning on $\mathcal D$, and define the corresponding finite-feature conditional evidence by
\begin{equation}
    Z_{\boldsymbol\theta,R}^{\mathcal D}
    =
    \mathbb E_{\hat{\boldsymbol\xi}_0\sim\mathcal N(\mathbf 0,\mathbf I_{R+|X_y|})}
    \left[
        p\!\left(
            \mathcal C
            \mid
            g_{\boldsymbol\theta}^{\mid y}(\hat{\boldsymbol\xi}_0)
        \right)
    \right].
    \label{eq:methods-conditional-evidence}
\end{equation}
When $g_{\boldsymbol\theta}^{\mid y}$ generates the exact conditional GP, $Z_{\boldsymbol\theta,R}^{\mathcal D}=p_{\boldsymbol\theta}(\mathcal C\mid\mathcal D)$; for the finite-feature construction, it is the corresponding approximation.
We therefore optimise
\begin{equation}
    J_{\mathcal D}(\boldsymbol\theta)
    =
    \log Z_{\boldsymbol\theta,R}^{\mathcal D}
    +
    \log p_{\boldsymbol\theta}(\mathcal D).
    \label{eq:methods-objective-conjugate}
\end{equation}
For a fixed value of $\boldsymbol\theta$, the gradient of the first term in Eq.~\eqref{eq:methods-objective-conjugate} can be estimated using the same bridge construction used to guide the conditional flow.
In particular,
\begin{align}
    \nabla_{\boldsymbol\theta}J_{\mathcal D}(\boldsymbol\theta)
    &=
    \mathbb E_{
        \hat{\boldsymbol\xi}_t
        \sim
        p_{t,\boldsymbol\theta}(\hat{\boldsymbol\xi}_t\mid\mathcal C,\mathcal D)}
    \left[
        \nabla_{\boldsymbol\theta}
        \log
        \mathbb E_{
            \hat{\boldsymbol\xi}_0
            \sim
            p(\hat{\boldsymbol\xi}_0\mid\hat{\boldsymbol\xi}_t)}
        \left[
            p\!\left(
                \mathcal C
                \mid
                g_{\boldsymbol\theta}^{\mid y}(\hat{\boldsymbol\xi}_0)
            \right)
        \right]
    \right]
    +
    \nabla_{\boldsymbol\theta}
    \log p_{\boldsymbol\theta}(\mathcal D).
    \label{eq:methods-objective-gradient}
\end{align}
For outer particles $\hat{\boldsymbol\xi}_t^{(b)}$ and bridge samples $\hat{\boldsymbol\xi}_0^{(b,s)}$ constructed as in Eq.~\eqref{eq:methods-weight-interpolation}, we therefore compute
\begin{equation}
    \widehat G_t^{\mathcal D}(\boldsymbol\theta)
    =
    \frac{1}{B}\sum_{b=1}^{B}
    \nabla_{\boldsymbol\theta}
    \log\left[
        \frac{1}{S}\sum_{s=1}^{S}
        p\!\left(
            \mathcal C
            \mid
            g_{\boldsymbol\theta}^{\mathcal D}(\boldsymbol\xi_0^{(b,s)})
        \right)
    \right]
    +
    \nabla_{\boldsymbol\theta}
    \log p_{\boldsymbol\theta}(\mathcal D) = \widehat G_t(\boldsymbol\theta)
    +
    \nabla_{\boldsymbol\theta}
    \log p_{\boldsymbol\theta}(\mathcal D)
    \label{eq:methods-objective-alt}
\end{equation}
where $\widehat G_t(\boldsymbol\theta)$ denotes the Monte Carlo estimate of the non-conjugate contribution, as defined in Sec.~\ref{sec: methods-hyperparameters}.
For the particular case in which $\mathcal D=\{X_y,\mathbf y\}$ consists of pointwise observations with independent Gaussian noise,
\begin{equation}
    y_i=f(x_i)+\varepsilon_i,
    \qquad
    \varepsilon_i\sim\mathcal N(0,\sigma_y^2),
\end{equation}
the observations have the marginal distribution
\begin{equation}
    \mathbf y\mid\boldsymbol\theta
    \sim
    \mathcal N\left(
        \boldsymbol\mu_y,
        \mathbf K(\boldsymbol\theta)
        +
        \sigma_y^2\mathbf I
    \right),
\end{equation}
where
\begin{equation}
    [\boldsymbol\mu_y]_i
    =
    \mu(x_i),
    \qquad
    [\mathbf K(\boldsymbol\theta)]_{ij}
    =
    k_{\boldsymbol\theta}(x_i,x_j),
    \qquad
    x_i,x_j\in X_y.
\end{equation}
The corresponding marginal log-likelihood is therefore
\begin{equation}
\begin{aligned}
    \log p_{\boldsymbol\theta}(\mathcal D)
    =&
    -\frac{1}{2}
    (\mathbf y-\boldsymbol\mu_y)^\top
    \left(
        \mathbf K(\boldsymbol\theta)
        +
        \sigma_y^2\mathbf I
    \right)^{-1}
    (\mathbf y-\boldsymbol\mu_y)
    \\
    &-\frac{1}{2}
    \log\det\left(
        \mathbf K(\boldsymbol\theta)
        +
        \sigma_y^2\mathbf I
    \right)
    -
    \frac{|X_y|}{2}\log(2\pi).
\end{aligned}
\label{eq:methods-gaussian-mll}
\end{equation}
Consequently, this contribution can be evaluated exactly and differentiated with respect to the kernel hyperparameters without introducing additional MC error.

Looking at Eq.~\eqref{eq:methods-gaussian-mll}, we note that the Cholesky decomposition of $\mathbf K(\boldsymbol\theta)+\sigma_y^2\mathbf I$ is already required by the canonical-kernel decoupled posterior construction, so the log-determinant is easily calculable.
This evidence term is important because the non-conjugate conditional term alone often admits degenerate solutions, such as reducing the output variance until a residual is small without explaining the observed data.

\subsection{Algorithm}

\begin{algorithm}[h]
  \caption{Weight-space kernel hyperparameter optimisation}
  \label{alg:basis-kernel-fit}
  \begin{algorithmic}[1]
  \Require Non-conjugate condition $\mathcal C$, Gaussian observations $\mathcal D=\{X_y,\mathbf y\}$, initial hyperparameters $\boldsymbol\theta$, particles $B$, bridge draws $S$, update interval $q$, number of steps $K$, prior-feature count $R$
  \State Set a decreasing time grid $1=t_0>t_1>\cdots>t_K=0$
  \State Sample initial flow states $\hat{\boldsymbol\xi}_{t_0}^{(b)}\sim\mathcal N(\mathbf 0,\mathbf I_{R+|X_y|})$, $b=1,\ldots,B$
  \State Draw and fix bridge-sampling noise $\boldsymbol\epsilon^{(b,s)}\sim\mathcal N(\mathbf0,\mathbf I_{R+|X_y|})$
  \State Initialise the basis, decoder, and guidance condition at the current $\boldsymbol\theta$

  \For{$k=0,\ldots,K-1$}
      \State Advance $\hat{\boldsymbol\xi}_{t_k}^{(1:B)}$ to $\hat{\boldsymbol\xi}_{t_{k+1}}^{(1:B)}$ using the guided flow

      \If{$(k+1)\bmod q=0$}
          \State Construct detached bridge samples using the fixed noise:
          \[
          \hat{\boldsymbol\xi}_0^{(b,s)}
          =
          \alpha(t_{k+1})
          \operatorname{stopgrad}\!\left(
              \hat{\boldsymbol\xi}_{t_{k+1}}^{(b)}
          \right)
          +
          \sqrt{1-\alpha(t_{k+1})^2}\,
          \boldsymbol\epsilon^{(b,s)},
          \qquad
          b=1{:}B,\;s=1{:}S.
          \]
          \State Rebuild the differentiable basis quantities at the current $\boldsymbol\theta$

          \State Compute
          \[
          \widehat G_{t_{k+1}}^{\mathcal D}(\boldsymbol\theta)
          =
          \frac{1}{B}\sum_{b=1}^{B}
          \nabla_{\boldsymbol\theta}
          \log\left[
              \frac{1}{S}\sum_{s=1}^{S}
              p\!\left(
                  \mathcal C
                  \mid
                  g_{\boldsymbol\theta}^{\mid y}(\hat{\boldsymbol\xi}_0^{(b,s)})
              \right)
          \right]
          +
          \nabla_{\boldsymbol\theta}
          \log p_{\boldsymbol\theta}(\mathcal D)
          \]

          \State Apply an Adam ascent step to $\boldsymbol\theta$ using $\widehat G_{t_{k+1}}^{\mathcal D}(\boldsymbol\theta)$

          \State Refresh the basis, decoder, and guidance condition
      \EndIf
  \EndFor

  \State Return $\widehat{\boldsymbol\theta}\gets\boldsymbol\theta$
  \end{algorithmic}
\end{algorithm}

A schematic algorithm is given in Algorithm~\ref{alg:basis-kernel-fit}.
We use the same underlying standard-normal bridge draws across updates, providing common random numbers and reducing additional MC variation in the hyperparameter updates.
Each update first constructs $S$ terminal weights from the current whitened flow state, before rebuilding the quantities required to differentiate with respect to $\boldsymbol\theta$, including the posterior Cholesky factors.
The gradient in Eq.~\eqref{eq:methods-objective-alt} is then computed, using MC samples for the non-conjugate contribution and the exact marginal-likelihood gradient for the conjugate contribution.
The flow state and bridge samples are treated as fixed when taking this derivative, so gradients are not propagated through the preceding flow trajectory.
After the optimiser step, quantities depending on $\boldsymbol\theta$ are recomputed without retaining the computational graph.
In basis space this refreshes the feature matrix and posterior mean, preventing subsequent SDE steps from using a decoder or likelihood built at stale hyperparameters.
After obtaining $\widehat{\boldsymbol\theta}$, we rerun the conditional flow with $\widehat{\boldsymbol\theta}$ fixed to obtain samples from the corresponding fitted conditional model.

\subsection{A Static Interpretation}
\label{subsec:static-interpretation}

We now relate our hyperparameter objective to the standard free-energy interpretation of the log normalising constant.
Specialising this identity to our latent GP representation provides a useful interpretation of the quantities balanced by evidence maximisation.

For simplicity, we here return to the generic evidence objective in Eq.~\eqref{eq:methods-hyperparameter-objective}, regarding \(\mathcal C\) as the complete conditioning statement.
Let
\(
    Z_{\boldsymbol\theta}
    \coloneqq
    Z_{\boldsymbol\theta,R}
    =
    p_{\boldsymbol\theta}(\mathcal C)
\)
and
\(
    J(\boldsymbol\theta)
    =
    \log Z_{\boldsymbol\theta}.
\)
Further, as in Eq.~\eqref{eq: weight-space-gp}, let
\(
    r(\boldsymbol\xi)
    =
    \mathcal N(\boldsymbol\xi;\mathbf0,\mathbf I_R)
\)
denote the standard latent Gaussian density, and let \(p_{\boldsymbol\theta}\) denote the law induced by \(\widetilde f_{\boldsymbol\theta}=g_{\boldsymbol\theta}(\boldsymbol\xi_0)\) when \(\boldsymbol\xi_0\sim r\).
The corresponding latent target is
\begin{equation}
    \rho_{\boldsymbol\theta}(\boldsymbol\xi_0\mid\mathcal C)
    =
    \frac{
        p\!\left(
            \mathcal C
            \mid
            g_{\boldsymbol\theta}(\boldsymbol\xi_0)
        \right)
        r(\boldsymbol\xi_0)
    }{
        Z_{\boldsymbol\theta}
    }.
    \label{eq:app-latent-target}
\end{equation}
By the standard Gibbs variational identity, evaluated at the conditioned law, the evidence balances satisfaction of the condition against the relative-entropy cost of deforming the Gaussian latent prior.
In our setting, whenever the relevant quantities are finite,
\begin{align}
    \log Z_{\boldsymbol\theta}
    =
    \mathbb E_{\boldsymbol\xi_0\sim\rho_{\boldsymbol\theta}(\cdot\mid\mathcal C)}
    \big[
        \log
        p \big(
            \mathcal C
            \mid
            g_{\boldsymbol\theta}(\boldsymbol\xi_0)
        \big)
    \big]
    -
    \operatorname{KL}\bigl(
        \rho_{\boldsymbol\theta}(\cdot\mid\mathcal C)
        \|r
    \bigr).
    \label{eq:static-free-energy-latent}
\end{align}
This follows from the identity
\(
\smash{
    \tfrac{\rho_{\boldsymbol\theta}(\boldsymbol\xi_0\mid\mathcal C)}{r(\boldsymbol\xi_0)}
    =
    \tfrac{p(\mathcal C\mid g_{\boldsymbol\theta}(\boldsymbol\xi_0))}
    {Z_{\boldsymbol\theta}}
}
\), taking logarithms, and taking expectations with respect to $\rho_{\boldsymbol{\theta}}(\cdot|\mathcal{C})$.
There is an equivalent interpretation directly in the GP output space.
Let \(\pi_{\boldsymbol\theta}(\cdot\mid\mathcal C)\) denote the corresponding conditioned law of \(\widetilde f_{\boldsymbol\theta}=g_{\boldsymbol\theta}(\boldsymbol\xi_0)\).
We then have
\begin{align}
    \log Z_{\boldsymbol\theta}
    =
    \mathbb E_{\widetilde f\sim\pi_{\boldsymbol\theta}(\cdot\mid\mathcal C)}
    \big[
        \log p(\mathcal C\mid\widetilde f)
    \big]
    -
    \operatorname{KL}\bigl(
        \pi_{\boldsymbol\theta}(\cdot\mid\mathcal C)
        \|p_{\boldsymbol\theta}
    \bigr).
    \label{eq:static-free-energy-process}
\end{align}
Thus, the log-evidence represents a balance between satisfaction of \(\mathcal C\) and the relative-entropy cost of deforming the GP prior \(p_{\boldsymbol\theta}\) into the conditioned GP law \(\pi_{\boldsymbol\theta}(\cdot\mid\mathcal C)\).

\paragraph{Hard Constraints.}
Suppose that the additional condition \(\mathcal C\) is the event that the approximate GP sample lies in a feasible set \(A\), so that
\(
    p(\mathcal C\mid\widetilde f)
    =
    \mathbf 1_A(\widetilde f).
\)
If \(p_{\boldsymbol\theta}(A)>0\), then
\(
    Z_{\boldsymbol\theta}
    =
    p_{\boldsymbol\theta}(A)
\)
and
\(
    \pi_{\boldsymbol\theta}(\cdot\mid\mathcal C)
    =
    p_{\boldsymbol\theta}(\cdot\mid A).
\)
Since
\(
    \mathrm d p_{\boldsymbol\theta}(\cdot\mid A)
    /
    \mathrm d p_{\boldsymbol\theta}
    =
    \mathbf 1_A/p_{\boldsymbol\theta}(A),
\)
we obtain
\begin{equation}
    \log Z_{\boldsymbol\theta}
    =
    -\operatorname{KL}\bigl(
        p_{\boldsymbol\theta}(\cdot\mid A)
        \|p_{\boldsymbol\theta}
    \bigr).
    \label{eq:hard-event-evidence-kl}
\end{equation}
Thus, for a positive-probability hard constraint, evidence maximisation admits the particularly simple interpretation of minimising the KL deformation from the unconditioned GP law to the constrained GP law.

\paragraph{Soft Constraints.}
Similarly, suppose that the additional condition is encoded by a likelihood of the form
\(
    p(\mathcal C\mid\widetilde f)
    =
    a\exp\{-\Phi_{\mathcal C}(\widetilde f)\},
\)
where \(\Phi_{\mathcal C}(\widetilde f)\geq0\) and \(a>0\) is independent of \(\boldsymbol\theta\).
Substituting
\(
    \log p(\mathcal C\mid\widetilde f)
    =
    \log a-\Phi_{\mathcal C}(\widetilde f)
\)
into Eq.~\eqref{eq:static-free-energy-process} gives, up to additive constants independent of \(\boldsymbol\theta\),
\begin{equation}
    -\log Z_{\boldsymbol\theta}
    \stackrel{+c}{=}
    \operatorname{KL}\bigl(
        \pi_{\boldsymbol\theta}(\cdot\mid\mathcal C)
        \|p_{\boldsymbol\theta}
    \bigr)
    +
    \mathbb E_{\widetilde f\sim\pi_{\boldsymbol\theta}(\cdot\mid\mathcal C)}
    \big[
        \Phi_{\mathcal C}(\widetilde f)
    \big].
    \label{eq:soft-residual-evidence-kl}
\end{equation}
Thus, for soft conditions, evidence maximisation admits the familiar free-energy interpretation of balancing condition satisfaction against the KL deformation of the unconditioned GP law.

\subsection{A Dynamic Interpretation}
\label{subsec:dynamic-interpretation}

The static free-energy identity above expresses evidence maximisation in terms of condition satisfaction and a KL deformation cost.
A standard stochastic-control interpretation of relative entropy, based on Doob transforms and Girsanov's theorem, relates this deformation cost to the quadratic energy of the corresponding controlled drift \citep{doob1957conditional,girsanov1960transforming,oksendal2003stochastic,boue1998variational}.
We now specialise this construction to the latent Ornstein--Uhlenbeck flow underlying \textsc{FeatureFlow}.

Recall from Eq.~\eqref{eq:app-latent-target} that the latent target is
$
\smash{
    \rho_{\boldsymbol\theta}(\boldsymbol\xi\mid\mathcal C)
    =
    \tfrac{
        p\!\left(
            \mathcal C
            \mid
            g_{\boldsymbol\theta}(\boldsymbol\xi)
        \right)
        r(\boldsymbol\xi)
    }{
        Z_{\boldsymbol\theta}
    }
}
$.
Rather than tilting the static latent law \(r\) by \(p(\mathcal C\mid g_{\boldsymbol\theta}(\boldsymbol\xi))\), one can consider a stationary reference diffusion in latent space, and tilt its path law by the likelihood of its terminal endpoint.
We work on a finite generation horizon, rescaled to $[0,1]$.
Let $\mathsf P$ denote the path law of the stationary latent Ornstein--Uhlenbeck process
\begin{equation}
    \mathrm d\boldsymbol U_s
    =
    -\frac12\bar\beta(s)\boldsymbol U_s\,\mathrm ds
    +
    \sqrt{\bar\beta(s)}\,\mathrm d\boldsymbol B_s,
    \qquad
    \boldsymbol U_0\sim r,
    \qquad
    0\leq s\leq1,
    \label{eq:reference-generation-sde}
\end{equation}
where $\bar\beta\in L^1([0,1])$ is nonnegative.
Since the process is stationary, $\boldsymbol U_s\sim r$ for every $s$.
Define the terminally tilted path law $\mathsf Q_{\boldsymbol\theta}$ by
\begin{equation}
    \frac{\mathrm d\mathsf Q_{\boldsymbol\theta}}
    {\mathrm d\mathsf P}
    =
    \frac{
        p\!\left(
            \mathcal C
            \mid
            g_{\boldsymbol\theta}(\boldsymbol U_1)
        \right)
    }{
        Z_{\boldsymbol\theta}
    }.
    \label{eq:path-tilt}
\end{equation}
Its terminal marginal is $\rho_{\boldsymbol\theta}(\cdot\mid\mathcal C)$.
The corresponding Doob potential is
$
    h_{\boldsymbol\theta}(s,\boldsymbol u)
    =
    \mathbb E_{\mathsf P}
    [
        p(
            \mathcal C
            \mid
            g_{\boldsymbol\theta}(\boldsymbol U_1)
        )
        \mid
        \boldsymbol U_s=\boldsymbol u
    ].
$
This is the generation-time counterpart of the smoothed likelihood used for guidance.
Under standard regularity conditions for the Doob transform and Girsanov's theorem, the dynamics under $\mathsf Q_{\boldsymbol\theta}$ are
\begin{equation}
    \mathrm d\boldsymbol U_s
    =
    \left(
        -\frac12\bar\beta(s)\boldsymbol U_s
        +
        \bar\beta(s)
        \nabla_{\boldsymbol u}
        \log h_{\boldsymbol\theta}(s,\boldsymbol U_s)
    \right)\mathrm ds
    +
    \sqrt{\bar\beta(s)}
    \mathrm d\boldsymbol B_s^{\mathsf Q_{\boldsymbol\theta}}.
    \label{eq:guided-sde-under-Q}
\end{equation}
Thus $\nabla_{\boldsymbol u}\log h_{\boldsymbol\theta}$ is the generation-time counterpart of the guidance term used in the conditional flow.
The standard Girsanov relative-entropy identity then gives  \citep[e.g.,][]{girsanov1960transforming,oksendal2003stochastic}
\begin{equation}
    \frac12
    \mathbb E_{\mathsf Q_{\boldsymbol\theta}}
    \left[
        \int_0^1
        \bar\beta(s)
        \left\|
            \nabla_{\boldsymbol u}
            \log h_{\boldsymbol\theta}(s,\boldsymbol U_s)
        \right\|^2
        \mathrm ds
    \right]
    =
    \operatorname{KL}\bigl(
        \rho_{\boldsymbol\theta}(\cdot\mid\mathcal C)
        \|r
    \bigr)
    -
    \operatorname{KL}\bigl(
        (\mathsf Q_{\boldsymbol\theta})_0
        \|\mathsf P_0
    \bigr).
    \label{eq:energy-with-boundary}
\end{equation}

Suppose now that we define the guidance energy by
\begin{equation}
    \mathcal E_{\mathrm{guide}}(\boldsymbol\theta)
    =
    \frac12
    \mathbb E_{\mathsf Q_{\boldsymbol\theta}}
    \left[
        \int_0^1
        \bar\beta(s)
        \left\|
            \nabla_{\boldsymbol u}
            \log h_{\boldsymbol\theta}(s,\boldsymbol U_s)
        \right\|^2
        \mathrm ds
    \right].
    \label{eq:guide-energy-def}
\end{equation}
Gaussian entropy contraction for the Ornstein--Uhlenbeck semigroup gives \citep{gross1975logarithmic,bakry2014analysis}
$
    \operatorname{KL}(
        (\mathsf Q_{\boldsymbol\theta})_0
        \|\mathsf P_0
    )
    \leq
    \exp\{
        -\int_0^1\bar\beta(s)\,\mathrm ds
    \}
    \operatorname{KL}(
        \rho_{\boldsymbol\theta}(\cdot\mid\mathcal C)
        \|r
    ).
$
Hence, whenever $\operatorname{KL}(\rho_{\boldsymbol\theta}(\cdot\mid\mathcal C)\|r)<\infty$, the boundary term vanishes in the complete-noising limit $\int_0^1\bar\beta(s)\,\mathrm ds\to\infty$.
In this case, Eq.~\eqref{eq:energy-with-boundary} then yields
\[
    \mathcal E_{\mathrm{guide}}(\boldsymbol\theta)
    =
    \operatorname{KL}\bigl(
        \rho_{\boldsymbol\theta}(\cdot\mid\mathcal C)
        \|r
    \bigr).
\]
Thus, combining this with Eq.~\eqref{eq:static-free-energy-latent}, we have that
\begin{equation}
    \log Z_{\boldsymbol\theta}
    =
    \mathbb E_{
        \boldsymbol\xi_0
        \sim
        \rho_{\boldsymbol\theta}(\cdot\mid\mathcal C)}
    \left[
        \log
        p\!\left(
            \mathcal C
            \mid
            g_{\boldsymbol\theta}(\boldsymbol\xi_0)
        \right)
    \right]
    -
    \mathcal E_{\mathrm{guide}}(\boldsymbol\theta).
    \label{eq:evidence-guidance-energy-boundary-free}
\end{equation}
It follows, applying the standard control-energy representation to our latent flow, that log-evidence maximisation can be interpreted dynamically as balancing condition satisfaction against the guidance energy required to deform the reference process.
For positive-probability hard constraints, the log-likelihood contribution vanishes on the constrained support, so, in the complete-noising limit, evidence maximisation reduces to minimising the guidance energy.
For soft constraints, the objective also includes the expected residual loss in Eq.~\eqref{eq:soft-residual-evidence-kl}.

\section{Experimental Details \& Additional Results}
\label{app: add-results}

All reported experiments use the CPU on an Apple M5 Pro with 24\,GB memory, using Python 3.12.13, PyTorch 2.13.0 \citep{paszke_pytorch_2019}, and GPyTorch 1.15.2 \citep{gardner2018gpytorch}. 
Runtime values are therefore hardware-dependent.

\subsection{Probabilistic Downscaling}
\label{app: downscaling}

This experiment compares a dense function-space GP flow with \textsc{FeatureFlow}, with increasing spatial resolution. 
Both methods use the same fitted kernel, observations, non-linear constraints, diffusion schedule, and sampling budget; only the sampling representation differs.

\subsubsection{Ground Truth and Observations}

The spatial domain is $[0,1]^2$. Each $H\times H$ evaluation grid contains $N=H^2$ equally spaced points, including the domain boundary. 
We use a zero-mean GP with an RBF kernel \citep{rasmussen_gaussian_2006}:
\begin{equation}
  k(x,x')=\sigma_f^2
  \exp\!\left(-\frac{\lVert x-x'\rVert^2}{2\ell^2}\right).
\end{equation}
The kernel is fitted once to 17 anchor observations and then frozen. 
Nine anchors have value $+5$ at $(a,a)$ for $a\in A = \{0.1,0.2,\ldots,0.9\}$, while eight have value $-5$ at $(a,1-a)\ \forall\ a\in\ A\textbackslash\{0.5\}$.
The observation-noise variance is $10^{-4}$. Starting from $\ell=0.3$ and $\sigma_f^2=1$, the hyperparameters are optimised conventionally by exact marginal likelihood for 500 Adam iterations with learning rate $0.1$, while the zero mean and noise variance remain fixed. 
The fitted values are $\ell=0.096$ and $\sigma_f^2=2.486$.

The ground truth $f^\star$ is one fixed draw from a scalable approximation to the anchor-conditioned posterior.
The posterior is represented using 4,096 RFFs and the 17 additional observation-noise features required by the exact Matheron's rule correction \citep{rahimi_random_2007, wilson_efficiently_2020}.
We then sample 32 locations $\{x_m\}_{m=1}^{32}$ uniformly from $[0,1]^2$ and observe $y_m=f^\star(x_m)$ without adding further noise. 
Both methods target the same GP posterior conditioned on these 32 pairs, using a Gaussian observation model with variance $10^{-4}$.

The GP flow evaluates the exact posterior mean and covariance on each grid and samples an $N$-dimensional whitened representation. 
Its covariance square root is computed by eigendecomposition with jitter $10^{-12}$. 
\textsc{FeatureFlow} instead uses 1,024 random Fourier features and 32 observation-noise features, giving a fixed latent width of 1,056.

\subsubsection{Non-linear Conditions}

Each grid is divided into a regular $8\times8$ partition. For region $A_q$ of the coarse grid, the conditioning targets are the discrete maximum and minimum of the ground truth in that grid cell:
\begin{equation}
  y_q^{\max}=\max_{x_i\in A_q} f^\star(x_i),
  \qquad
  y_q^{\min}=\min_{x_i\in A_q} f^\star(x_i).
\end{equation}
The targets are recomputed at each resolution. 
The condition likelihood is
\begin{equation}
  \begin{split}
  \log p(\mathcal{C}\mid f)
  =-\frac{1}{2\sigma_{\mathcal{C}}^2}\sum_{q=1}^{64}
  \biggl[
    &\left(\max_{x_i\in A_q}f(x_i)-y_q^{\max}\right)^2\\
    +&\left(\min_{x_i\in A_q}f(x_i)-y_q^{\min}\right)^2
  \biggr],
  \end{split}
\end{equation}
with $\sigma_{\mathcal{C}}=10^{-3}$. 
Thus each sample is guided by 64 maximum and 64 minimum residuals. 
The guidance for a GP flow differentiates this condition with respect to the whitened grid values; \textsc{FeatureFlow} differentiates the same condition through the feature map with respect to its weights.

\subsubsection{Sampling and resolution sweep}

Both methods use the same linear variance-preserving diffusion schedule.
For each method and resolution, we generate 50 samples using 1,000 stochastic Euler--Maruyama steps on a log-SNR time grid from $t=1$ to $t=10^{-10}$. 
Guidance uses one Monte Carlo terminal draw, with its norm smoothly clipped to 200. 
All experiments use a fixed seed for reproducibility.

For display in Fig. \ref{fig: downscale-results}, the function-space side lengths are $H\in\{16,24,32,40,48,56,64,72,88,96\}$. 
\textsc{FeatureFlow} additionally uses $H\in\{112,128,144,168,192,224,256\}$.

\subsubsection{Timing and Evaluation}

Setup time for the GP flow includes evaluating the posterior mean and dense $N\times N$ covariance and computing its eigendecomposition.
\textsc{FeatureFlow} setup involves constructing the fixed-width weight-space flow, evaluating the basis on the grid, and constructing the two condition objects.
Kernel fitting, generation of the ground truth and point observations, and construction of the regional targets are excluded from timing.

For sample $b$, field accuracy is measured by
\begin{equation}
  \operatorname{RMSE}_b
  =\left[
    \frac{1}{N}\sum_{i=1}^{N}
    \bigl(f_b(x_i)-f^\star(x_i)\bigr)^2
  \right]^{1/2}.
\end{equation}
In Fig. \ref{fig: downscaling-sample-quality}, we report the mean RMSE over the 50 samples, measured across a range of different numbers of features $R$, with all other parameters held fixed ($H=W=96$, $N=9216$).

\begin{figure}
    \centering
    \includegraphics[width=0.8\linewidth]{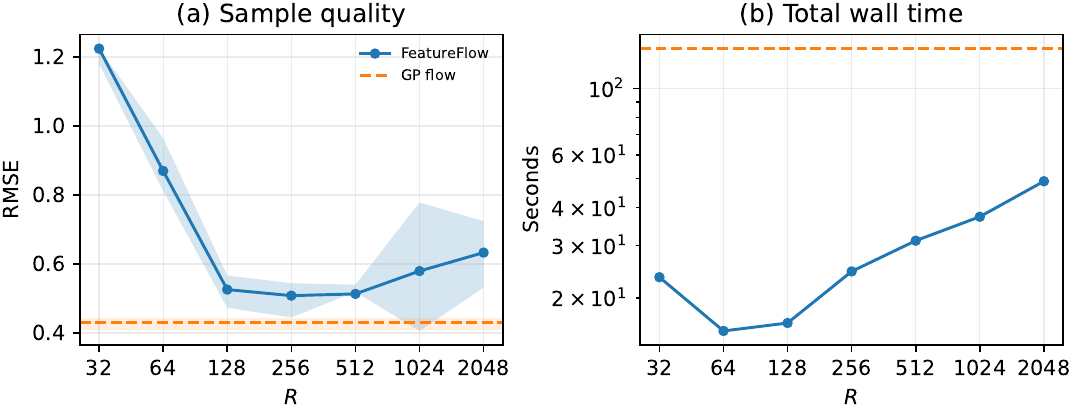}
    \caption{\textbf{(a)} Comparison of sample quality between the GP flow and \textsc{FeatureFlow}, as a function of the number of features $R$. \textbf{(b)} Total wall time of sampling using each method; on this problem \textsc{FeatureFlow} trades off a small decrease in sample quality for large computational speed-up.}
    \label{fig: downscaling-sample-quality}
\end{figure}

In Fig. \ref{fig: downscale-results}, we show the measured setup times together with reference slopes $O(N^3)$ for the dense GP flow and $O(N)$ for \textsc{FeatureFlow}. 
These are fixed reference exponents rather than fitted slopes.
For the 0.5-second setup-budget comparison, the highest measured resolution within budget is $40\times40$ for function space and $144\times144$ for \textsc{FeatureFlow}.

\subsection{PDE Solutions}
\label{app:pde}

The spatial domain $\Omega$ is the interior of a discrete star-shaped mask, embedded in the unit square $[0,1]^2$ (shape $H \times W = H^2$). 
The mask was retrieved from the code accompanying \cite{solin_know_2019}, which is licensed under Apache 2.0. 

We consider the stationary Allen--Cahn equation
\begin{align}
  a f(x)-bf(x)^3-c\,\Delta f(x)&=0 &&x\in\Omega,
  \label{eq:allen-cahn-app}
  \\
  f(x)&=0 &&x\in \partial \Omega,
\end{align}
with $(a,b,c)=(-1,-0.2,2\times 10^{-3})$.
Equivalently, the physical residual is $r_{\mathrm{PDE}}(f)=-f+0.2\times f^3- (2\times 10^{-3})\Delta f$. 

\paragraph{Reference field.}
The observation-generating field was generated via a numerical solve using a nine-point Laplacian on a $64\times64$ grid. 
With $h=1/63$, the nine-point Laplacian is
\begin{equation}
\begin{split}
 (\Delta_h f)_{ij}
 &=\frac{1}{6h^2}\bigg[
 4\!\sum_{(p,q)\in\mathcal A}f_{i+p,j+q}\\
 &\quad+\sum_{(p,q)\in\mathcal B}f_{i+p,j+q}
 -20f_{ij}\bigg],
\end{split}
 \label{eq:nine-point-laplacian}
\end{equation}
where $\mathcal A=\{(-1,0),(1,0),(0,-1),(0,1)\}$ and
$\mathcal B=\{(-1,-1),(-1,1),(1,-1),(1,1)\}$.

The eighth smallest Dirichlet eigenmode of $-c\Delta_h$ was used to initialise the non-linear solve, in order to converge to a high-frequency solution.
A damped Newton method then solves Eq.~\eqref{eq:allen-cahn-app}. 
The RMS-residual tolerance is $10^{-10}$, the maximum number of Newton updates is 80, and the smallest permitted line-search step is $2^{-24}$. 
The resulting $64\times64$ array is used for observations.

\subsubsection{Observations and GP posterior}
\label{app:pde-data-prior}

We draw $M=16$ distinct observation locations $X_y = \{\mathbf x_m\}_{m=1}^M$ uniformly without replacement from the array of the numerical solve, which we treat as the ground truth. 
Measurements obey
\begin{equation}
 y_m=f_\star(\mathbf x_m)+\epsilon_m,
 \qquad \epsilon_m\overset{\mathrm{iid}}{\sim}\mathcal N(0,0.01^2).
\end{equation}

The zero-mean GP prior has an isotropic scaled radial-basis-function kernel
\begin{equation}
 k_{\boldsymbol \theta}(x,x')=\sigma_f^2 \exp\!\left(-\frac{\lVert x-x'\rVert^2}{2\ell^2}\right),
 \label{eq:pde-rbf}
\end{equation}
The observation noise variance is fixed at $10^{-4}$. 
At every value of $\ell$, the prior is conditioned analytically on the $M$ observations:
\begin{align}
 C_{\boldsymbol \theta}&=K_{\boldsymbol \theta}(X_y,X_y)+10^{-4}\mathbf I,\\
 m_{\boldsymbol \theta}(x)&=k_{\boldsymbol \theta}(x,X_y)C_{\boldsymbol \theta}^{-1}y,\\
 k_{\boldsymbol \theta}^{\mathrm{post}}(x,x')&=k_{\boldsymbol \theta}(x,x')-
 k_{\boldsymbol \theta}(x,X_y)C_{\boldsymbol \theta}^{-1}k_{\boldsymbol \theta}(X_y,x').
\end{align}
Every run starts from $\ell=0.3$. 
The output scale is frozen at $\sigma_f^2=1.0$ during adaptation, so the length scale is the only optimized kernel parameter. 

\subsubsection{Function-space and Feature-space Representations}
\label{app:pde-representations}

\paragraph{GP flow.}
At a grid of $N=H^2$ points, the dense method evaluates the posterior mean and covariance, computes a symmetric eigendecomposition $K_{\boldsymbol \theta}^{\mathrm{post}}=U_{\boldsymbol \theta}\Lambda_{\boldsymbol \theta} U_{\boldsymbol \theta}^\top$, and uses $L=U_{\boldsymbol \theta}\Lambda_{\boldsymbol \theta}^{1/2}$ to map a whitened state $z\in\mathbb R^N$ to $f_{\boldsymbol \theta}=m+L_{\boldsymbol \theta}z$. 
A diagonal jitter of $10^{-12}$ is added before the eigendecomposition and eigenvalues are lower-clamped at $10^{-12}$. 
During adaptation this dense decoder is recomputed after each length scale update.

\paragraph{\textsc{FeatureFlow}.}
The feature-space method uses the kernel approximation defined by
\begin{equation}
 f_{\boldsymbol \theta}(x)=m(x)+\boldsymbol\Psi^{\mid y}_{\boldsymbol \theta}(x)^\top \hat{\boldsymbol\xi},
 \qquad \hat{\boldsymbol\xi}\sim\mathcal N(0,\mathbf I_{R+M}).
\end{equation}
The first $R = 256$ features are Dirichlet Laplacian eigenfunctions on the star
domain. 
The remaining $|X_y| = M = 16$ features implement the observation-noise block in a Matheron correction \citep{wilson_efficiently_2020}; thus, the total diffusion dimension is $R=272$.

The eigenbasis is constructed numerically once on a fine $256\times256$ star mask, according to \citet{solin_know_2019}. 
The exact RBF kernel supplies the rank-16 pathwise conditioning correction and posterior mean. 
Both the spectral amplitudes and conditioning correction are refreshed after every adaptive update. 

\subsubsection{PDE and Boundary Likelihoods}
\label{app:pde-likelihoods}

There are two conditions, $\mathcal{C}\in\{\mathrm{PDE},\partial\Omega\}$: the PDE
condition and the boundary condition. 
Each has log likelihood
\begin{equation}
 \ell_\mathcal{C} = \log p(\mathcal{C}\mid f_{\boldsymbol \theta})=-\frac{1}{2A_\mathcal{C}}\sum_{j=1}^{A_\mathcal{C}}
 \left(\frac{r_{\mathcal{C},j}(f_{\boldsymbol \theta})}{\sigma_\mathcal{C}}\right)^2.
 \label{eq:component-average}
\end{equation}
Here, $A_\mathcal{C}$ is the number of residual components, and $\sigma_C$ is the dimensionless constant that defines the ``sharpness'' of the condition. 
The total constraint log likelihood is $\ell_{\mathrm{cond}}=\ell_{\mathrm{PDE}}+\ell_{\partial\Omega}$.

For the PDE residual, $\sigma_{\mathrm{PDE}}=10^{-3}$, and for the boundary residual, $\sigma_{\partial\Omega}=10^{-1}$. 
Setting $\sigma_{\partial\Omega}$ to larger values for the GP flow leads to very poor PDE residuals.
The boundary is the one-pixel exterior ring whose pixels have at least one four-connected neighbour in $\Omega$, and its target value is zero, in keeping with the boundary condition in Eq. \eqref{eq:allen-cahn-app}.

The dense GP flow evaluates the PDE residual at interior stencil centres using
Eq.~\eqref{eq:nine-point-laplacian}, with $h=1/(H-1)$.  The sampled field is
defined on the whole square; consequently, stencil neighbours outside $\Omega$
take their sampled values, while the separate exterior-ring likelihood steers
the adjacent values toward zero. 
\textsc{FeatureFlow} instead evaluates
\begin{equation}
 r_{\mathrm{PDE}}(f_{\boldsymbol \theta};x)
 =a f_{\boldsymbol \theta}(x)-b f_{\boldsymbol \theta}(x)^3-c\,\Delta f_{\boldsymbol \theta}(x)
\end{equation}
at every interior grid point, using the eigenvalues and analytic Laplacian of
the pathwise RBF correction. 
In Fig. \ref{fig: pde-results}, the two plotted PDE MSEs therefore use the same physical equation but method-specific operators (nine-point finite difference for the GP flow and analytic for \textsc{FeatureFlow}).

\subsubsection{Guided Sampling}
\label{app:pde-sampling}

Every run uses \texttt{num\_steps=1000}.
The sampler constructs 1,001 scheduled times at equal increments of log signal-to-noise ratio between $t=1$ and $t=10^{-10}$. 
It uses a linear variance-preserving schedule and stochastic Euler--Maruyama integration. 

Likelihood guidance uses $S=5$ terminal Monte Carlo draws per particle and step.
The Monte Carlo noise is drawn once and held fixed along each trajectory. 
For stability, the guidance vector is smoothly norm-limited using a $\tanh$ function. 
The maximum norm is 200 for the function-space GP flow and 100 for the weight-space \textsc{FeatureFlow}.

Every fixed-prior run generates 50 particles in one batch. 
The adaptive runs use only 5 particles.
In summary, the common sampling configuration is: float64 CPU calculation; \texttt{num\_steps=1000}; equal log-SNR spacing on $t\in[10^{-10},1]$; a linear variance-preserving schedule; stochastic Euler--Maruyama integration with final denoising; 50 particles per fixed-prior ensemble; five particles during adaptation; five guidance terminal draws; and guidance norm limits of 200 for the GP flow and 100 for \textsc{FeatureFlow}.

\subsubsection{Online Hyperparameter Adaptation}
\label{app:pde-adaptation}

Adaptive runs start from $\ell=0.3$ and update the lengthscale every 10 integration steps using Adam with learning rate $0.05$. 
Hence, the 999 integrated steps produce 99 parameter updates. 
A separate set of $S=5$ frozen terminal draws is used in the adaptation objective. 
At an update time, one update step is taken according to App. \ref{app: hyperparam-opt}, in order to maximise the objective.
Hyperparameter trajectories during the adaptive sampling run are shown in Fig. \ref{fig: pde-param-histories}, for both the GP flow and \textsc{FeatureFlow}.
We note that the optimisation dynamics in \textsc{FeatureFlow} appear more stable than those in the vanilla GP flow.

\begin{figure}
    \centering
    \includegraphics[width=\linewidth]{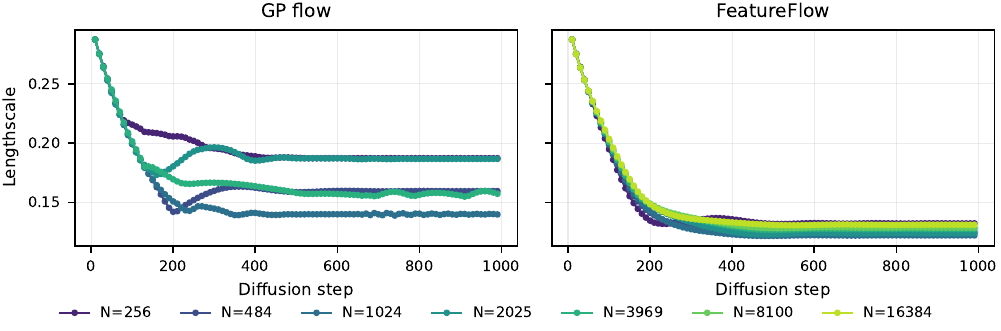}
    \caption{Hyperparameter adaptation histories in both function (\textbf{left}) and weight space (\textbf{right}) for the PDE problem.}
    \label{fig: pde-param-histories}
\end{figure}

Table~\ref{tab:pde-final-lengthscales} gives the realised final values. 
The output scale is 1.0 in each case.

\begin{table}[t]
\centering
\caption{Final fitted lengthscales. A dash denotes a GP flow resolution that was not run because of its dense covariance cost.}
\label{tab:pde-final-lengthscales}
\small
\begin{tabular}{r r c c}
\hline
$H=W$ & $N$ & GP flow & \textsc{FeatureFlow} \\
\hline
 16  &   256 & 0.1872 & 0.1322 \\
 22  &   484 & 0.1595 & 0.1248 \\
 32  &  1024 & 0.13978 & 0.1222 \\
 45  &  2025 & 0.1866 & 0.1240 \\
 63  &  3969 & 0.1572 & 0.1275 \\
 90  &  8100 & ---          & 0.1288 \\
128  & 16384 & ---          & 0.1311 \\
\hline
\end{tabular}
\end{table}

\subsubsection{Resolution Ablation, Metrics, and Timing}
\label{app:pde-evaluation}

We consider side lengths $H\in\{16,22,32,45,63,90,128\}$. 
\textsc{FeatureFlow} is run at all seven resolutions; the dense GP flow is restricted to $H\leq64$. 
The same 16 observations are used at every resolution.

PDE error is the mean of the squared, unscaled physical residual over all 50 samples and all PDE evaluation points.
Boundary error is the corresponding mean of $f^2$ on the exterior ring. 

Wall time is measured with \texttt{perf\_counter}.  For an initial run, total
time is fixed-run setup plus fixed-run sampling. 
For an adapted result it is the setup and sampling time of the fresh final fixed run plus the complete adaptive run, including adaptive setup and all 99 updates.  
The dotted reference curves do not estimate their exponent: the exponents are fixed to three for GP flow and one for \textsc{FeatureFlow}, with only a multiplicative constant fitted by matching the geometric mean of measured and reference times in log space.

\subsection{Sea Level Anomaly Estimation}
\label{app:sla-estimation}

\subsubsection{Data and Synthetic Observations}
\label{app:altimeter-data}

\paragraph{Truth field and sampling geometry.}
The target field has shape $105\times80$ and is taken from the IMOS OceanCurrent gridded sea-level anomaly (GSLA) analysis for 29 August 2026 in the southeast-Australia domain, provided under a Creative Commons Attribution 4.0 International License \citep{IMOS}. 
In Fig. \ref{fig:raw-gt}, we show a visualisation of the GSLA analysis---which we treat as a high-quality reference field because it assimilates additional observational sources unavailable to our inversion.
We use the tracks of 16 real, distinct satellite passes \citep{IMOS}, with simulated shots taken at $0.288$ km intervals.
The GSLA field is sampled at these locations by bilinear interpolation of the gridded product.
This produces $63{,}458$ shots spanning approximately $18{,}276$ km of satellite track length.

\paragraph{Nuisance fields and waveform generation.}
The sea-surface height at each shot is taken from the interpolated GSLA field. 
The significant wave height (SWH), 10-m wind speed $U_{10}$, and skewness $\lambda$ are synthetic nuisance fields treated as known in the experiment.
The reflectivity-tilt parameter is constant, $\beta=0.12$.

The forward model produces one 104-gate mean waveform per shot.
Conditional on that mean, independent Gamma speckle is generated with 90 looks, resulting in $63{,}458\times104$ observations.

\paragraph{Information supplied to the inversion.}
The inversion observes the raw 104-gate waveform from every shot. 
It treats the four nuisance quantities (SWH, $U_{10}$, $\lambda$, and $\beta$) as known. 
Thus the only unknown field in this experiment was sea-surface height.  
The exact joint Gamma log-likelihood is evaluated over all $6{,}599{,}632$ waveform gates. 
Shots are divided into chunks of 256 solely to limit memory use; chunk log-likelihoods are summed before guidance is calculated, so this introduces no likelihood factorisation or approximation.

\begin{wrapfigure}{r}{0.35\textwidth}
    \centering
    \includegraphics[width=\linewidth]{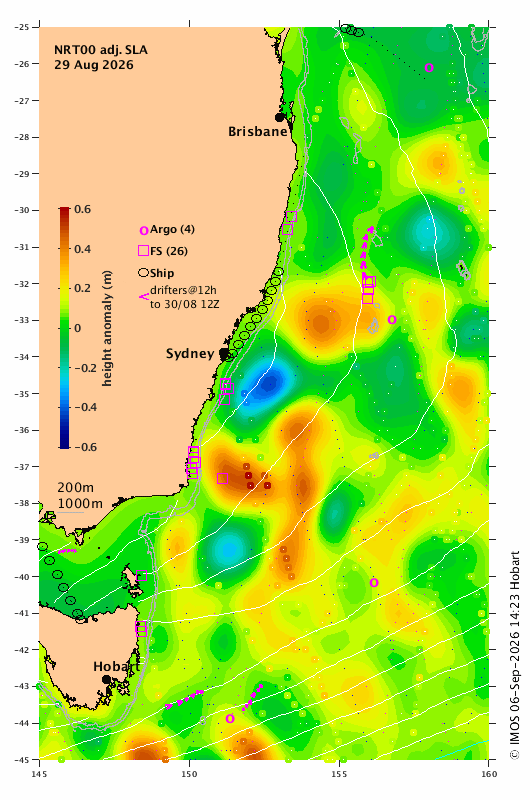}
    \caption{Visualisation of the GSLA analysis provided by \citet{IMOS}. This ground truth SLA field is obtained using a range of observational data as shown, beyond satellite altimetry.}
    \label{fig:raw-gt}
\end{wrapfigure}

\subsubsection{Spatial Prior and Basis Representation}
\label{app:altimeter-prior}

Longitude $\zeta$ and latitude $\varphi$ are mapped to a local planar coordinate system before applying the RBF kernel. 
With $(\zeta_0,\varphi_0)=(152.5^{\circ},-35^{\circ})$,
\begin{equation}
  x=(\zeta-\zeta_0)\times 91.19,
  \quad
  y=(\varphi-\varphi_0)\times 111.32
  \quad
  \mathbf{x} = (x, y)^\top.
\end{equation}

The prior is

\begin{equation}
 f(\mathbf{x})\sim\mathcal{GP}\!\left(
   \mu,\; \sigma_f^2
   \exp\!\left[-\frac{\|\mathbf{x}-\mathbf{x}'\|^2}{2\ell^2}\right]
 \right).
 \label{eq:altimeter-prior}
\end{equation}

For the \textit{oracle} prior, $\mu$, $\sigma_f^2$, and $\ell$ are fitted to the complete truth field before conditioning on waveforms. 
The adaptive experiment retains the oracle mean and variance and treated only the length scale as unknown.

A total of $R=512$ RFFs (256 sampled frequencies) parameterises the field as in \ref{app: rffs}.

The same weight vector is evaluated both on the $105\times80$ display grid and at all shot coordinates. 
The RFF run therefore has 512 latent variables regardless of the number of observations. 
The full rectangular field is sampled, but land cells are masked when computing field metrics.
Future work could investigate appropriate physics-informed boundary conditions, especially near coastlines, that would inform the samples drawn from \textsc{FeatureFlow} beyond the non-Gaussian measurements considered here.

\subsubsection{Guided Sampling}
\label{app:altimeter-sampling}

Every run uses \texttt{num\_steps=100}. 
The sampler constructs 101 scheduled times at equal increments of log signal-to-noise ratio between $t=1$ and $t=10^{-10}$.

Likelihood guidance uses $S=1$ terminal Monte Carlo draw per particle and step. 
For stability, the guidance vector is smoothly norm-limited to a maximum norm of $1000$ using a $\tanh$ function.

The fixed-prior runs each generate 50 particles in one batch. 
In summary, the common sampling configuration is: RFF basis with $R=512$; float64 CPU calculation; \texttt{num\_steps=100}; equal log-SNR spacing on $t\in[10^{-10},1]$; stochastic Euler--Maruyama integration; 50 particles per fixed-prior ensemble; one guidance terminal draw; guidance norm limit 1000; and likelihood chunks of 256 shots.

\subsubsection{Adaptive Length Scale Experiment}
\label{app:altimeter-adaptation}

The adaptive comparison consists of the following three stages (samples from each shown in Fig. \ref{fig:altimeter-all-fields}).

\begin{enumerate}
  \item \textbf{Misspecified fixed prior.}  The initial length scale is set to three times the oracle value, $\ell_{\mathrm{mis}}=3\ell_{\mathrm{oracle}}=226.25$ km. 
  With this value fixed, 50 posterior draws are produced using the sampling configuration above.

  \item \textbf{Adaptive path.}  A single guided trajectory is initialised at the same misspecified length scale. 
  The kernel output variance is frozen at $\sigma_f^2=0.014\ \text{m}^2$, the mean is fixed at $0.097$ m, and only the isotropic RBF length scale is trainable. 
  After every five integration steps, one Adam update \citep{kingma_adam_2017} is applied to the length scale.

  The learning rate is rescaled from $0.07$ to $0.07 \times 226 = 15.82$ using the initial length scale to encourage convergence in a reasonable number of steps. 
  Both guidance and kernel fitting use only one terminal Monte Carlo draw; the kernel-fitting draw noise remains fixed between updates.
  The final fitted length scale obtained is $81.55$ km.

  \item \textbf{Fitted fixed prior.}  The fitted value $81.55$ km is frozen and a
  fresh 50-draw guided ensemble was generated. 
  This is the adaptive method's final posterior ensemble, used to report results. 
\end{enumerate}

The separate oracle fixed run uses a fixed random seed for both its basis construction and sampler. 
Its length scale is $75.42$ km and it also contains 50 draws.

All experiments use a fixed seed for reproducibility.

\begin{figure}
    \centering
    \includegraphics[width=0.9\linewidth]{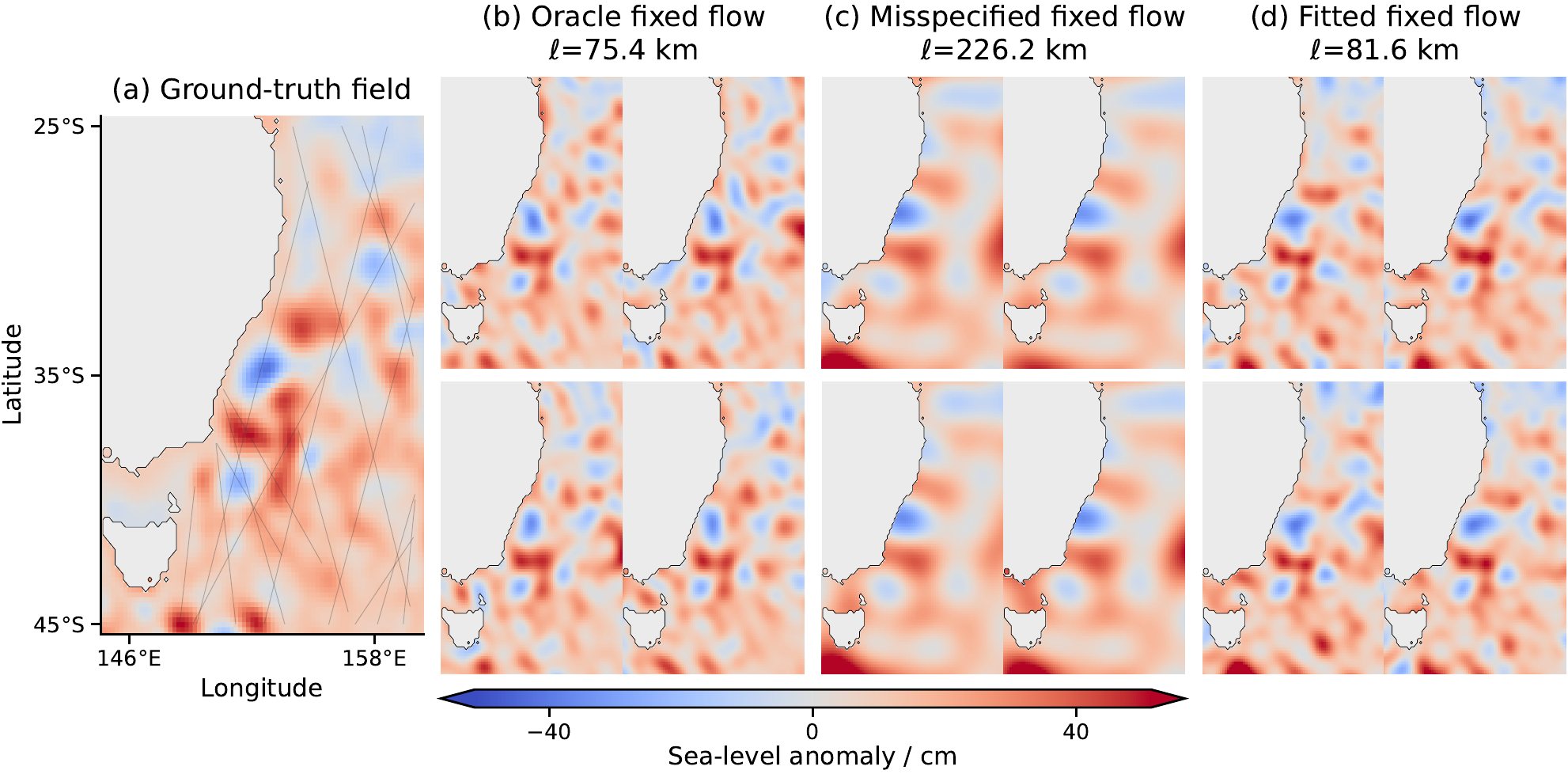}
    \caption{\textbf{(a)} The ground-truth field, with satellite tracks overlaid. Samples using \textsc{FeatureFlow} with \textbf{(b)} oracle, \textbf{(c)} misspecified and \textbf{(d)} fitted kernel length scales $\ell$. The oracle fixed flow represents the best possible performance of \textsc{FeatureFlow} with respect to the adaptive scheme.}
    \label{fig:altimeter-all-fields}
\end{figure}

\subsubsection{Baselines and evaluation}
\label{app:altimeter-evaluation}

The classical baseline is MLE3 retracking applied independently to every noisy waveform \citep{brown_average_1977, rodriguez_altimetry_1988,tourain_benefits_2021}. 
Its three fitted quantities are epoch, SWH, and backscatter; sea-surface height is then obtained from the fitted epoch. 
The retracker converged for all $63{,}458$ waveforms. 
We report the uncorrected MLE3 baseline as the primary comparison. 
A post-hoc debiased diagnostic subtracts the MLE3 bias measured on these same evaluation shots ($-7.408$ cm); because this correction uses the truth, it is not a deployable estimator and is only to be used for comparison.

For an ensemble $\{f^{(m)}(\mathbf{x})\}_{m=1}^{M}$, the point estimate is its sample mean. 
Field RMSE is

\begin{equation}
 \mathrm{RMSE}_{\mathrm{field}}
 =100\times \sqrt{\frac{1}{|\mathcal{O}|}\sum_{i\in\mathcal{O}}
 (\overline{f}(\mathbf{x}_i)-f^\star(\mathbf{x}_i))^2},
\end{equation}

\noindent where $\mathcal{O}$ is the interior of the finite-ocean mask, $\overline{f}$ is the mean field, taken over posterior samples, and $f^\star$ is the ground truth field. 
On-track RMSE uses the analogous average over all $63{,}458$ shots after evaluating each draw's shared RFF weights at the exact shot coordinates. 
MLE3 has no two-dimensional field estimate, so only its on-track error is reported. 
The values provided in Table \ref{tab:altimeter-reference-errors} are reflected in Fig. \ref{fig: altimeter-fields}(d), and a comparison with the \textit{post hoc} debiased MLE3 estimate is provided in Fig. \ref{fig: altimeter-debiased-errors}.

\begin{table*}[t]
\centering
\small
\caption{Observed reconstruction errors. The adaptive-path row is the single changing-prior trajectory used for fitting, not the final ensemble.}
\label{tab:altimeter-reference-errors}
\begin{tabular}{lrrr}
\hline
Method & Draws & Field RMSE / cm & On-track RMSE / cm \\
\hline
Misspecified fixed ($\ell=226.25$ km) & 50 & 12.605 & 8.185 \\
Adaptive path ($226.25\rightarrow81.55$ km) & 1 & 14.024 & 12.449 \\
Fitted fixed ($\ell=81.55$ km) & 50 & 8.173 & \textbf{1.651} \\
Oracle fixed ($\ell=75.42$ km) & 50 & \textbf{7.635} & 1.830 \\
MLE3 & --- & --- & 9.202 \\
\textit{Post hoc} debiased MLE3 & --- & --- & 5.458 \\
\hline
\end{tabular}
\end{table*}

\begin{figure}
    \centering
    \includegraphics[width=0.6\linewidth]{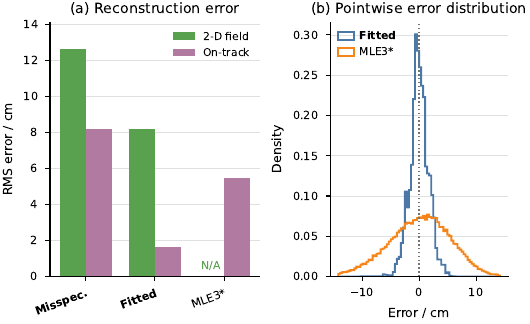}
    \caption{Equivalent results to those shown in Fig. \ref{fig: altimeter-fields}(d, f), only using \textit{post hoc} debiased MLE3 results. \textsc{FeatureFlow} exhibits smaller RMS error even against this more challenging baseline.}
    \label{fig: altimeter-debiased-errors}
\end{figure}




\subsubsection{Forward Model}
\label{app: forward-model}
The forward model used to generate synthetic radar echoes given a ground-truth SLA field is detailed here.
The model has two parts: a deterministic mapping from the SLA field and the nuisance fields to the mean waveform, followed by a non-Gaussian observation model.

For an observation at location $\mathbf{x}_j(\zeta_j,\varphi_j)$, the sea-surface height $f(\mathbf{x}_j)$ determines the two-way delay relative to the tracker reference height $H$:
\begin{equation}
    \tau_j =
    \frac{2}{c}
    \left[
    H-f(\mathbf{x}_j)
    \right],
\end{equation}
where $c$ is the speed of light. The delay associated with range gate $k$ is
\begin{equation}
    t_k =
    \left(k-G_{\mathrm{ref}}\right)\Delta t,
    \qquad
    k=1,\ldots,K,
\end{equation}
where $\Delta t=3.125\,\mathrm{ns}$ (temporal spacing between adjacent range gates) and $G_{\mathrm{ref}}=32.5$ (the range gate index associated with an SLA $f(\mathbf{x}_j)=0$). 
The significant wave height (SWH) determines the temporal width of the scattering surface and its convolution with the radar point-target response:
\begin{equation}
    \sigma_{s,j} =
    \frac{\mathrm{SWH}_j}{2c},
    \qquad
    \sigma_{c,j}^{2} =
    \sigma_{p}^{2}+\sigma_{s,j}^{2},
\end{equation}
where $\sigma_p$ is the width of the point-target response and $\sigma_{c,j}^2$ is the composite waveform width obtained by convolving the scattering-surface distribution with the point-target response. 
Wind speed near the surface ($U_{10}$) determines the backscatter coefficient $\sigma_j^0$ using empirical coefficients $\Sigma_A$ and $\Sigma_B$, and waveform amplitude $A_j$:
\begin{equation}
    \sigma^{0}_j
    =
    \Sigma_A-\Sigma_B
    \log_{10}\!\left(\max\{U_{10,j},1\}\right),
    \quad
    A_j
    =
    P_{\mathrm{ref}}\,
    10^{(\sigma^{0}_j-\sigma^{0}_{\mathrm{ref}})/10}.
\end{equation}
For a Gaussian sea surface at exact nadir, the Brown mean waveform is
\begin{equation}
    B_j(x)
    =
    A_j
    \exp\!\left(
    \frac{\alpha^{2}\sigma_{c,j}^{2}}{2}
    -\alpha x
    \right)
    \Phi\!\left(
    \frac{x}{\sigma_{c,j}}
    -\alpha\sigma_{c,j}
    \right),
\end{equation}
where $x=t_k-\tau_j$, $\alpha$ is the trailing-edge decay rate, and $\Phi$ is the standard Gaussian cumulative distribution function.
Surface skewness $\lambda_j$ and electromagnetic reflectivity-tilt parameter $\beta_j$ are incorporated through the Hermite coefficients
\begin{equation}
    \left(
    q_{0,j},q_{1,j},q_{2,j},q_{3,j},q_{4,j}
    \right)
    =
    \left(
    1,\,
    \beta_j,\,
    -\frac{\lambda_j\beta_j}{2},\,
    -\frac{\lambda_j}{6},\,
    -\frac{\lambda_j\beta_j}{6}
    \right).
\end{equation}
The expected power in gate $k$ of echo $j$ is then
\begin{equation}
    m_{jk}
    =
    P_{\mathrm{thermal}}
    +
    \left.
    \sum_{n=0}^{4}
    q_{n,j}(-1)^n\sigma_{s,j}^{\,n}
    \frac{\mathrm{d}^{\,n}B_j(x)}{\mathrm{d}x^{n}}
    \right|_{x=t_k-\tau_j}
\end{equation}
where $P_{\textrm{thermal}}$ is the minimum power observed due to the thermal floor.
Finally, we model the observed gate powers to follow an exact speckle-noise model using $90$ looks
\begin{equation}
    Y_{jk}\mid m_{jk}
    \sim
    \operatorname{Gamma}\!\left(
    N_l,\frac{m_{jk}}{N_l}
    \right),
    \quad
    N_l=90,
\end{equation}
independently across echoes and range gates; we use the shape-scale convention for the Gamma distribution here.
This parameterisation gives
\begin{equation}
    \mathbb{E}[Y_{jk}\mid m_{jk}]=m_{jk},
    \quad
    \operatorname{Var}[Y_{jk}\mid m_{jk}]
    =
    \frac{m_{jk}^{2}}{N_l}.
\end{equation}
Thus, the forward model maps the SLA and nuisance fields to all $6{,}599{,}632$ observed powers:
\begin{equation}
    \left\{
    f(\mathbf{x}_j),\,
    \mathrm{SWH}_j,\,
    U_{10,j},\,
    \lambda_j,\,
    \beta_j
    \right\}_{j=1}^{63{,}458}
    \mapsto
    \left\{m_{jk}\right\}
    \mapsto
    \left\{Y_{jk}\right\}
\end{equation}
where the first map is deterministic and a function of all nuisance variables, and the second map is a stochastic draw from the Gamma observation model.

\end{document}